\theoremstyle{plain}
\newtheorem{theorem}{Theorem}
\newtheorem{proposition}{Proposition}
\newtheorem{definition}{Definition}
\newcommand{\p}{\mathbb{P}}
\newcommand{\weight}{\theta}
\newcommand{\E}{\mathbb{E}}
\title{Type-II Saddles and Probabilistic Stability of Stochastic Gradient Descent}
\begin{document}
\author{Liu Ziyin$^{1,2}$, Botao Li$^{3}$, Tomer Galanti$^4$, Masahito Ueda$^{5,6}$\\

$^1$\textit{Research Laboratory of Electronics, Massachusetts Institute of Technology}\\
$^2$\textit{Physics and Informatics Laboratories, NTT Research}\\
\textit{$^3$Laboratoire de Probabilit\'es, Statistique et Mod\'elisation/Universit\'e Paris Cit\'e}\\
\textit{$^4$ Center of Brains, Minds and Machines (CBMM), Massachusetts Institute of Technology}\\
\textit{$^5$RIKEN Center for Emergent Matter Science (CEMS)}\\
\textit{$^6$Department of Physics, The University of Tokyo}}
\maketitle

\begin{abstract}
Characterizing and understanding the dynamics of stochastic gradient descent (SGD) around saddle points remains an open problem. We first show that saddle points in neural networks can be divided into two types, among which the Type-II saddles are especially difficult to escape from because the gradient noise vanishes at the saddle. The dynamics of SGD around these saddles are thus to leading order described by a random matrix product process, and it is thus natural to study the dynamics of SGD around these saddles using the notion of {probabilistic stability} and the related Lyapunov exponent. Theoretically, we link the study of SGD dynamics to well-known concepts in ergodic theory, which we leverage to show that saddle points can be either attractive or repulsive for SGD, and its dynamics can be classified into four different phases, depending on the signal-to-noise ratio in the gradient close to the saddle.
\end{abstract}

\section{Introduction}
Saddle points exist extensively in the loss function of neural networks. An important problem in deep learning theory is to characterize and understand the dynamics of SGD around saddle points. In this work, we divide the saddle point problem into two types: (1) Type-I, where the gradient noise is nonvanishing at the saddle, and (2) Type-II, where there is no gradient noise at the saddle. The main focus of our work is the stability of the Type-II saddles, which are more difficult to escape from using SGD. We show that these saddle points can be either attractive or repulsive for SGD, even if the Hessian matrix contains at least one negative eigenvalue. Recent progress in the learning dynamics of neural networks suggests that the trajectory of SGD is predominantly saddle-to-saddle \cite{jacot2021saddle, abbe2023sgd}. This means that SGD will stay close to many saddle points during training and makes it possible for SGD to actually ``stop" at one of these saddle points at the end of training. It is thus important and urgent to understand the stability\footnote{Here, ``stability" is a synonym of ``attractivity" and should not be confused with the algorithmic stability.}  of these saddle points.

Our main contributions are:
\begin{enumerate}[leftmargin=15pt, noitemsep,topsep=0pt, parsep=1pt,partopsep=1pt]
    \item we propose to classify saddle points in neural networks into two types, among which the Type-II saddles are shown to be difficult to escape;
    \item we propose using the probabilistic stability and Lyapunov exponents to study the attractivity of Type-II saddle points of neural networks; this bridges the conventional study of dynamical systems in control theory and ergodic theory to study the dynamics of SGD;
    \item we use the probabilistic stability to show that close to a Type-II saddle point, SGD has at least four different phases of learning, which we find to be relevant for understanding the initialization of neural nets.
\end{enumerate}
This work is organized as follows. Section~\ref{sec: related works} discusses the most relevant works. Section~\ref{sec: two types of saddles} discusses the difference between the two types of saddle points encountered in training neural networks. Section~\ref{sec: main results} studies the probabilistic stability and Lyapunov exponent of SGD around these fixed points and connects to the well-known results in ergodic theory. Section~\ref{sec: exp} presents the experiments. All proofs and experimental details are presented in the Appendix.

\section{Related Works}\label{sec: related works}
\textbf{Dynamical stability}. Dynamical stability centers around studying the attractivity and repulsiveness of fixed points. From an optimization perspective, stability can be seen as a worst-case guarantee for optimization because the training algorithm cannot converge to a fixed point that is unstable. Conceptually, the work closest to ours is Ref.~\cite{wu2018sgd}, which uses the variance of SGD to characterize the set of solutions that are preferred by SGD, namely the solutions that are flat and have a rather weak noise. However, as our result shows, moment-based notions of stability cannot be applied to saddle points. Instead, we propose to study this problem with the tools of probabilistic stability, which has been studied extensively in control theory \cite{khas1967necessary, eckmann1985ergodic, teel2014converse}. The connection between Lyapunov exponents and probabilistic stability for the random matrix product problem is well known \cite{diaconis1999iterated}, but its relevance for studying saddle points in deep learning is unclear because one rarely defines a ``saddle point" for matrix product problems. Also, its relevance to deep learning has yet to be studied. Prior works have used similar notions to study the convergence to a local minimum \cite{gurbuzbalaban2021heavy, hodgkinson2021multiplicative}. In contrast, our focus is on the stability and attractivity of saddle points.

\begin{wrapfigure}{r}{0.4\linewidth}
\includegraphics[width=0.9\linewidth]{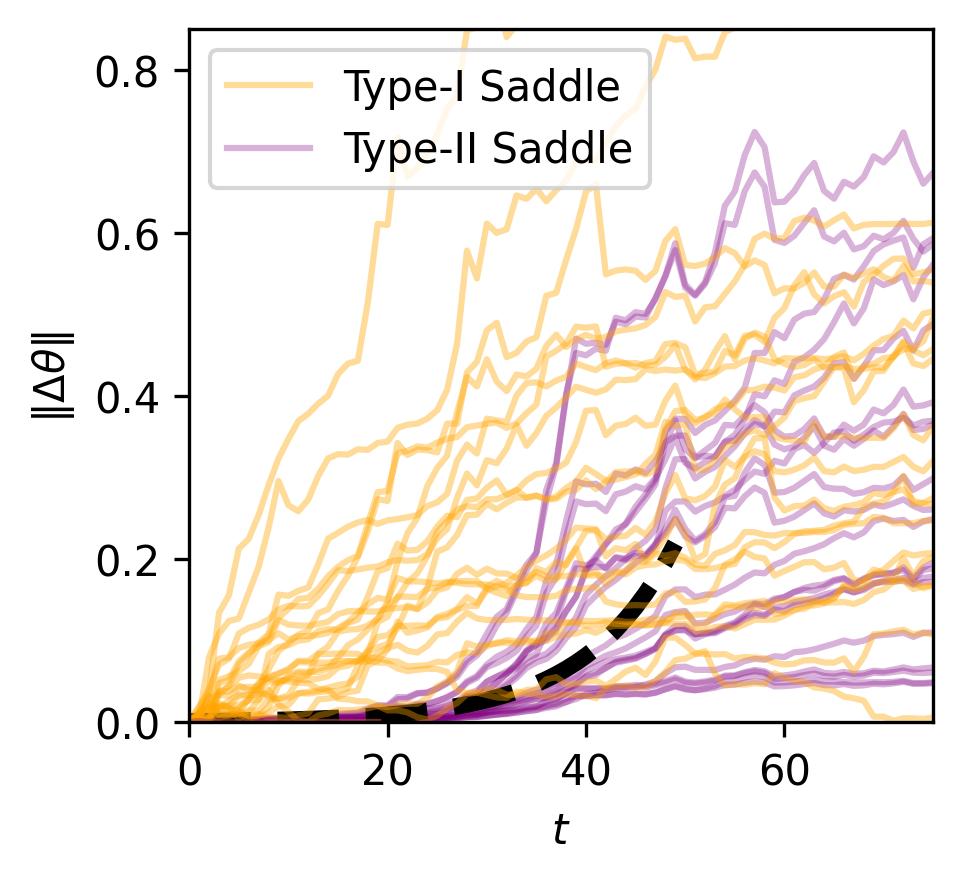}
\vspace{-1em}
\caption{\small Escaping from two types of saddles in a ReLU network under SGD. We see that for the escapes from type-I saddles, the escaping process starts immediately for every trajectory. For type-II saddles, the escape only starts significantly after the training starts, despite the gradient noise. The black dashed line shows an exponential fit to the type-II, implying the connection of the problem to Lyapunov exponents. See Appendix~\ref{app sec: figure 1 setting} for details on the construction of these saddles.}\label{fig: two types of saddles}
\vspace{-2em}
\end{wrapfigure}

\textbf{Saddle points}. Due to the nonlinear nature of the loss function of neural networks, escaping from saddle points has been an important problem in deep learning \cite{dauphin2014identifying, Ge2015, Mertikopoulos20, Vlaski22, Pemantle1990}. As a primary example, Ref.~\cite{dauphin2014identifying} mainly focuses on studying saddles from the perspective of gradient descent, where there is no difference between Type-I and Type-II saddles. Also, our focus is not on how to escape from saddles, but on the scientific question of whether a saddle point is attractive or not. See the next section for more discussion about prior literature on saddles.

\section{Two Types of Saddle Points}\label{sec: two types of saddles}

Escaping from saddle points is a well-known and fundamental problem in the optimization of neural networks. Conventional wisdom believes that as long as the saddle points are strict (having at least one negative eigenvalue in the Hessian), escaping from it is easy \cite{sun2020global}. This holds true for gradient descent training. However, it is unclear whether strictness is sufficient to guarantee an escape when the training proceeds with SGD, where the gradient noise confounds the training process.

Here, we propose to classify the saddle points into two types and show that the Type-II saddles are much more difficult to escape than the Type-I saddles. A point $\theta$ is said to be a saddle point when $\E[\nabla_\theta\ell] =0$ and $\theta$ is not a local minimum. This implies that 
 \begin{equation}
     \E[\theta_{t+1} -\theta_t] = O(\|\Delta \theta\|),
 \end{equation}
where $\|\Delta \theta\|$ is the Euclidean distance from the saddle and $t$ is the parameter at the $t$-th iteration of SGD. However, when stochasticity is taken into account, there are two types of saddles that satisfy this condition:
\begin{equation}
    \theta_{t+1} -\theta_t = \lambda r_t + O(\|\Delta \theta\|) \quad \text{(Type-I saddle)};
\end{equation}
\begin{equation}\label{eq: type two saddle dynamics}
    \theta_{t+1} -\theta_t = \lambda \hat{H} \Delta \theta_t + O(\|\Delta \theta\|^2) \quad\text{(Type-II saddle)},
\end{equation}
where $\lambda$ is the learning rate, $r_t$ is a random vector with a non-vanishing variance that is independent of $\theta$, and $\hat{H}$ is a random matrix, also independent of theta. The source of the randomness is due to the minibatch sampling over the data, and $\hat{H}$ can be easily identified as the sample-wise Hessian matrix. There is a crucial difference between the two types of saddles: Type-I saddles have a nonvanishing noise at the saddle, and this noise makes it possible for SGD to escape from the saddle. Meanwhile, the Type-II saddles have a vanishing noise at the saddle, making it more difficult to escape from. See Figure~\ref{fig: two types of saddles} for a comparison of escaping from the two types of saddles during neural network training. Here, the Type-II saddle is induced by the permutation symmetry between neurons in the hidden layer. This discussion motivates the following definition for Type-II saddles. Let $H(\theta)$ denote the Hessian of the loss at $\theta$, and $P$ denote the projection to the subspaces where $H$ has non-positive eigenvalues.
\begin{definition}
    $\theta$ is a Type-II saddle of $\ell(\theta,x)$ if (1) $ P \nabla_\theta \ell(\theta,x) = 0$ for all $x$ and (2) $\theta$ is not a local minimum of $\E[\ell(\theta,x)].$
\end{definition}
Setting $P=I$ recovers Eq.~\eqref{eq: type two saddle dynamics} and so this definition generalizes Eq.~\eqref{eq: type two saddle dynamics}. Prior works suggest that Type-II saddle points exist abundantly in the loss function of neural networks and the matrix $P$ depends on the architecture of the network. The main cause of the saddle points in neural networks is the vast number of parameter symmetries, such as permutation symmetry \cite{fukumizu2000local, simsek2021geometry, entezari2021role, hou2019minimal}, rescaling symmetry \cite{Dinh_SharpMinima, neyshabur2014search}, and rotation symmetry \cite{ziyin2023what}. 
More recently, Ref.~\cite{ziyin2023symmetry} proves that close to these symmetry-induced saddle points, the SGD dynamics is always described by the type of dynamics in Eq.~\eqref{eq: sgd dynamics linearized}. Convergence to these symmetry-induced saddles has been identified as a main cause of collapsing of models into low-capacity solutions \cite{ziyin2023symmetry}. Literature regarding SGD escaping saddle points  (e.g., \cite{Ge2015, Mertikopoulos20, Vlaski22, Pemantle1990}) often takes advantage of the assumption of non-vanishing noise thus studies escaping type-I saddle. Thus, Type-II saddles are understudied even despite their relevance in deep learning.

A main result in \cite{ziyin2023symmetry} shows that as long as the saddle is symmetry-induced, all escaping directions are contained within the subspace given by a projection matrix $P$, and the dynamics \eqref{eq: type two saddle dynamics} take a low-rank form:
\begin{equation}\label{eq: sgd dynamics linearized}
    P\theta_{t+1} = P\theta_t - \lambda  P^T\hat{H}(x_t) P (\theta_t - \theta^*), 
\end{equation}
where $\lambda$ is the learning rate, $\theta^*$ is the critical point under consideration, $\hat{H}(x_t)$ is a random symmetric matrix that is a function of the random variables $x_t$, which can stand for both a single data point or a minibatch of data points. 
Essentially, it is the data distribution of $\hat{H}$ that matters. When we have the same data distribution but a different batch size $S$, the distribution of $\hat{H}$ is different. For our purpose, it suffices to say that it is interesting and important to study the case when $\hat{H}$ has a certain low-rank structure with probability $1$, which is frequently the case in neural networks. 

Lastly, while our focus is on the saddles, our theory also applies to interpolation minima in neural networks. Here, because the loss function for every data point vanishes, and the dynamics around it also obeys an identical form to Eq.~\eqref{eq: type two saddle dynamics}. The dynamics we consider is more general than that around an interpolation minimum because the Hessians $\hat{H}$ in Eq.~\eqref{eq: sgd dynamics linearized} are allowed to have both positive and negative eigenvalues, whereas an interpolation minimum only allows for positive semidefinite $\hat{H}$. Therefore, the general solution of Eq.~\eqref{eq: sgd dynamics linearized} also helps us understand this type of minima better once we restrict the study to PSD Hessians.

\section{Probabilistic Stability of SGD}\label{sec: main results}
In this section, we introduce the main theoretical framework. We first define the probabilistic stability, and then apply it to Type-II saddles.

\subsection{Probabilistic Stability}

We use $\|\cdot \|_{{p}}$ to denote the ${p}-$norm of a matrix or vector and $\|\cdot \|$ to denote the case where ${p}=2$. The notation $\to_p$ indicates convergence in probability.

\begin{definition}
    A sequence (of random variables) $\{
    \theta_t\}_{t}^{\infty}$ generated by the SGD algorithm is probabilistically stable at {a constant vector} $\theta^*$ if $\theta_t \to_p \theta^*$.
\end{definition}
Here, the notation $\to_p$ denotes convergence in probability. A sequence ${\theta}_t$ converges in probability to $\theta^*$ if $\lim_{t\to\infty} \p(\|\theta_t-\theta^*\| {>} \epsilon) = 0$ for any $\epsilon>0$. While this notion of stability is a simple use of standard probability theory, it is a common tool in control theory to study stability. We will see that this notion of stability is especially suitable for studying the stability of the {saddle} points in SGD. The standard alternative is the notion of attractivity based on statistical moments.

\begin{definition}
\label{def:2}
    A sequence $\{\theta_t\}_{t}^{\infty}$ is $L_p$-norm stable at $\theta^*$ if $\lim_{t\to \infty} \E\|\theta_t - \theta^*\|_p^p \to 0$.
\end{definition}

For deep learning, the sequence of $\theta_t$ is the model parameters obtained by the iterations of the SGD algorithm for a neural network. For dynamical systems in general and deep learning specifically, it is impossible to analyze the convergence behavior of the dynamics starting from an arbitrary initial condition. Therefore, we have to restrict ourselves to the neighborhood of a given stationary point and consider the linearized dynamics around it.

\begin{figure*}
    \centering
    \includegraphics[width=0.38\linewidth]{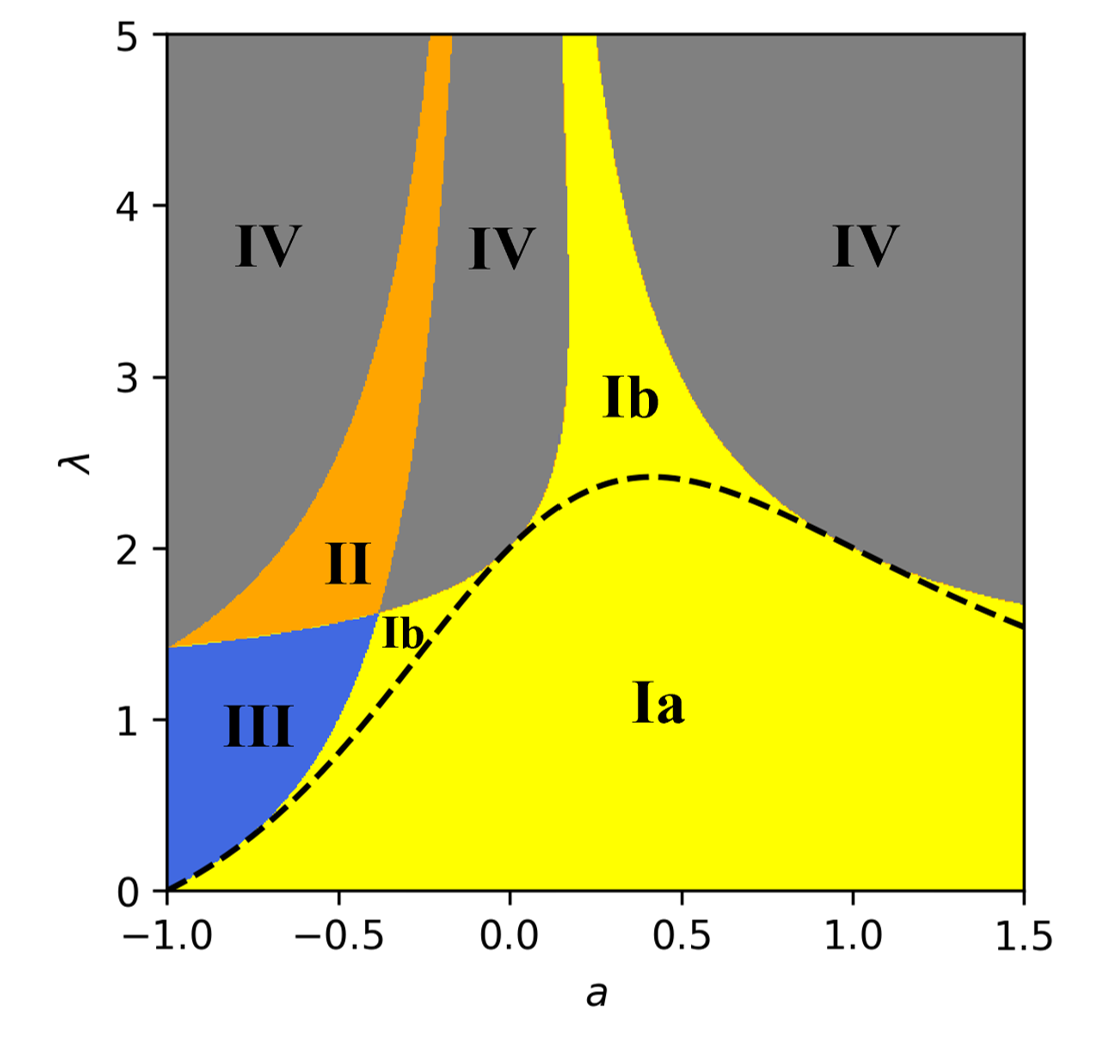}
    \includegraphics[width=0.42\linewidth]{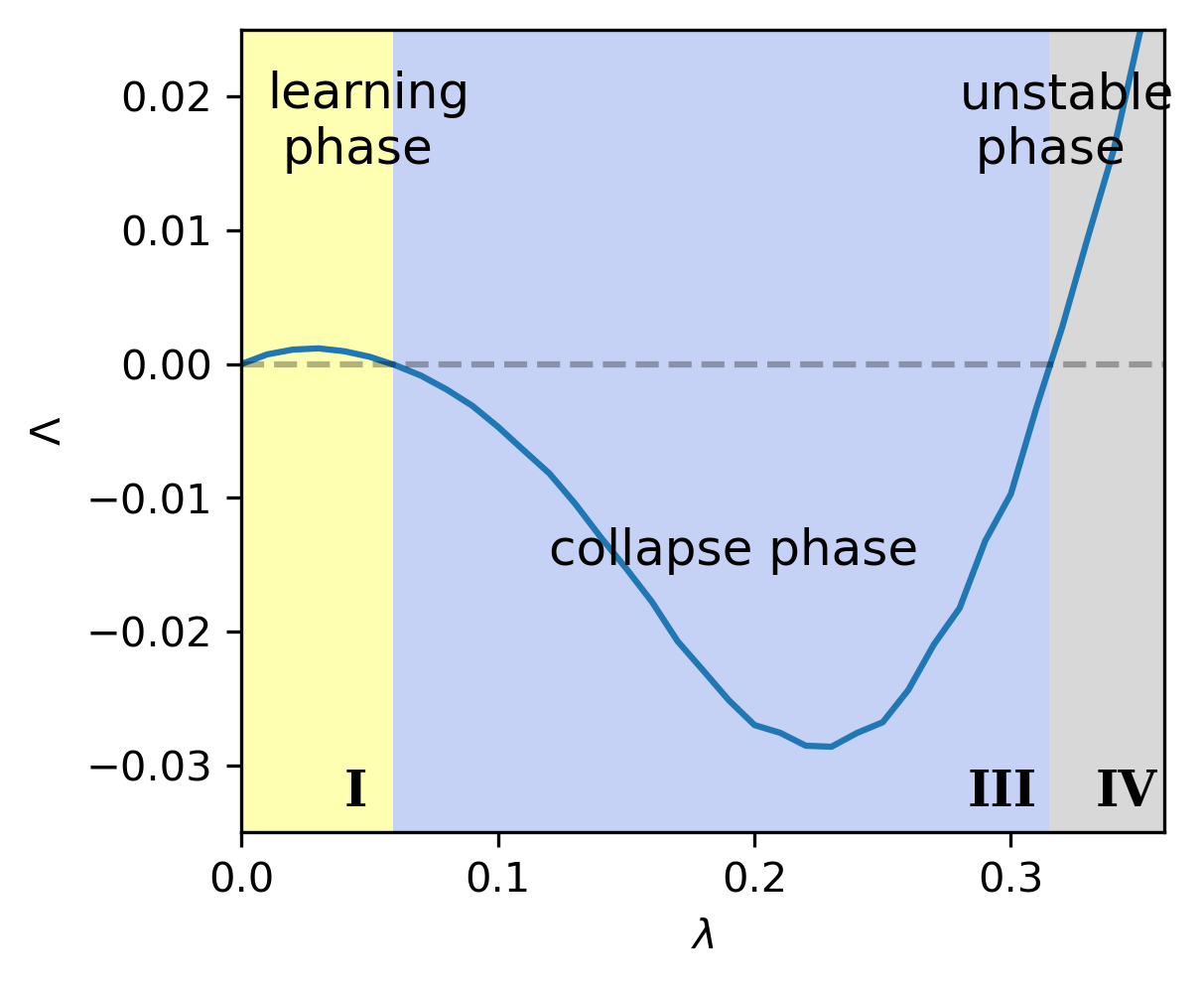}
    \caption{\small SGD exhibits a complex phase diagram through the lens of probabilistic stability. \textbf{Left}: {$a$ denotes the parameter in the data distribution, as discussed in detail in section \ref{sec: phases of learning sgd}.} For a matrix factorization saddle point, the dynamics of SGD can be categorized into at least five different phases. Phase \textbf{I}, \textbf{II}, and \textbf{IV} correspond to a successful escape from the saddle. Phase \textbf{III} is where the model converges to a low-rank saddle point. Phase \textbf{I} corresponds to the case $w_t \to_p u_t$, which signals correct learning. In phase \textbf{I{a}}, the model also converges in variance. Phase \textbf{II} corresponds to stable but incorrect learning, where $w_t \to_p -u_t$. Phase \textbf{IV} corresponds to complete instability. \textbf{Right}: the phases of SGD can quantified by the sign of the Lyapunov exponent $\Lambda$. Where $\Lambda<0$, SGD collapses to a saddle point; when $\Lambda >0$, SGD escapes the saddle and enters an escaping phase. The two escaping phases are qualitatively different. For a small learning rate, the model is in a learning phase due to the repulsiveness of the saddle point at a small learning rate, and the model is likely to converge to local minima close to the saddle. For a very large learning rate, SGD escapes the saddle due to the dynamical instability of SGD, and the model will move far away from the saddle. Besides, the magnitude of the Lyapunov exponent can also quantity the speed of the learning dynamics. See Appendix~\ref{app sec: exp detail} for numerical details of this example.}
    \label{fig:first phase diagram}
\end{figure*}

\subsection{Rank-1 Dynamics}
An analytically solvable case is when the dynamics lie in a 1d manifold. Let $\hat{H}(x)=h(x) nn^T$ is rank-$1$ for a random scalar function $h(x)$, and a fixed unit vector $n$ for all data points $x$. Thus, the dynamics simplifies to a 1d dynamics, where $h(x) \in \mathbb{R}$ is the corresponding eigenvalue of $\hat{H}(x)$:
\begin{equation}\label{eq: 1d symmetry dynamics}
    \theta_{t+1} = \theta_t - \lambda h(x) (\theta_t - \theta^*).
\end{equation}

\begin{theorem}\label{theo: 1d symmetry lyapunov condition}
    Let $\theta_t$ follow Eq.~\eqref{eq: 1d symmetry dynamics}. Then, for any distribution of $h(x )$, $n^T(\theta_t - \theta^*) \to_p 0$ if and only if
    \begin{equation}\label{eq: 1d prob stability condition}
        \E_x[\log |1-\lambda h(x)|] < 0.
    \end{equation}
\end{theorem}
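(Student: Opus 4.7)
The plan is to reduce everything to the asymptotics of an i.i.d.\ sum of log-increments and read off the answer via the strong law of large numbers. First I would project the dynamics onto $n$: setting $\eta_t := n^T(\theta_t - \theta^*)$ and using $n^T n = 1$, Eq.~\eqref{eq: 1d symmetry dynamics} collapses to the scalar recursion $\eta_{t+1} = (1 - \lambda h(x_t))\eta_t$, which iterates to
\begin{equation}
\log|\eta_t| = \log|\eta_0| + \sum_{s=0}^{t-1} \log|1 - \lambda h(x_s)|.
\end{equation}
Since SGD draws minibatches $x_s$ independently, the increments $Y_s := \log|1-\lambda h(x_s)|$ are i.i.d. If $\p(\lambda h = 1) > 0$, then by Borel--Cantelli $\eta_t$ hits $0$ in finite time almost surely and stays there, while $\E Y = -\infty$; both sides of the equivalence are then trivially true, so I may assume $|1-\lambda h|>0$ almost surely.

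The strong law of large numbers then gives $t^{-1}\log|\eta_t| \to \Lambda := \E[\log|1-\lambda h(x)|]$ almost surely. If $\Lambda<0$, then $\log|\eta_t| \to -\infty$ a.s., hence $\eta_t \to 0$ almost surely and, a fortiori, in probability; this proves sufficiency. If $\Lambda>0$, the same law yields $|\eta_t|\to\infty$ a.s., which rules out convergence in probability to $0$ whenever $\eta_0\neq 0$.

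The main obstacle, where I expect the argument to require care, is the boundary case $\Lambda=0$, in which the law of large numbers is uninformative. If $Y_s$ is deterministic (hence a.s.\ equal to $0$), then $|\eta_t|=|\eta_0|$ for all $t$ and convergence fails trivially. Otherwise $\V(Y)>0$, and I would invoke the central limit theorem for the mean-zero random walk $S_t := \sum_{s<t} Y_s$: for any fixed $c$, $\p(S_t<c) = \p\bigl(S_t/\sqrt{t\,\V(Y)} < c/\sqrt{t\,\V(Y)}\bigr) \to \Phi(0) = 1/2$, so $\p(|\eta_t|<\epsilon)\to 1/2$ for every $\epsilon>0$ and convergence in probability again fails. (If $\V(Y)=\infty$, the same conclusion follows from the Chung--Fuchs recurrence of symmetric mean-zero one-dimensional random walks.) Combining the three cases proves the ``if and only if''.
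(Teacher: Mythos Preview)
Your approach---project onto $n$, take logarithms, and apply the law of large numbers to the resulting i.i.d.\ sum---is exactly the paper's. You are in fact more careful than the paper: its proof treats only the cases $\Lambda<0$ and $\Lambda>0$ and never addresses the boundary $\Lambda=0$ or the event $\{\lambda h=1\}$, both of which you handle.
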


The l.h.s. of the inequality can be identified as the Lyapunov exponent for the process. The condition \eqref{eq: 1d prob stability condition} is a sharp characterization of when a critical point becomes attractive. It also works with weight decay. When weight decay is present, the diagonal terms of $\hat{H}$ are shifted by $\gamma$, and so $h = h' + \gamma$. 

At what learning rate is the condition violated? To leading orders in $\lambda$, this can be identified by expanding the logarithmic term up to the second order in $\lambda$:
\begin{equation*}
    \E_x[\log |1-\lambda h(x)|] =  - \lambda \E[h(x)] - \frac{1}{2}\lambda^2\E_x[h(x)^2] + {O}(\lambda^3).
\end{equation*}
Ignoring the second-order term, we see that the dynamics always follow the sign of $\E[h(x)]$, in agreement with the belief that the GD algorithm always escapes a strict saddle. When the second-order term is taken into consideration, the fluctuation of $h(x)$ now decides the attractivity of the critical point, which is attractive if 
\begin{equation}
    \lambda > 2\frac{-\E[h(x)]}{\E[h(x)^2]}.
    \label{eq: lr condition}
\end{equation}
This condition directly points to the reason why Type-II saddle points are difficult (or impossible) to escape: the critical point can be attractive even if $h<0$, when the point has become a saddle point. 
The r.h.s. of the condition also has a natural interpretation as a signal-to-noise ratio (SNR) in the gradient. The {numerator} is the Hessian of the original loss function, which determines the signal in the gradient. The denominator is the strength of the gradient noise in the minibatch \cite{wu2018sgd}. An illustration of this solution is given in Figure~\ref{fig:first phase diagram}. We show the probabilistic stability conditions for a rank-$1$ saddle point with a rescaling symmetry (see Section~\ref{sec: phases of learning sgd}). The loss function is $\ell(u,w)= -xy uw + o(u^2 + w^2)$. Here, the data points $xy=1$, and $xy=a$ are sampled with equal probability. These saddles appear naturally in matrix factorization problems and also in recent sparse learning algorithms \cite{poon2021smooth, poon2022smooth, ziyin2023sparsity, kolb2023smoothing}.

\subsection{Insufficiency of Norm-Stability}
Theorem~\ref{theo: 1d symmetry lyapunov condition} provides a perfect example to compare the probabilistic stability with the norm-stability. First of all, it is easy to see that if SGD converges to a point in $L_p$-norm, it must converge in probability. Therefore, moment stability always implies probabilistic stability, but not vice versa. Thus, norm stability is a more restricted notion than probabilistic stability. 

The following result shows that saddle points are always unstable for moment stability, which is constructively established by the following proposition.
\begin{theorem}\label{prop: moment insufficiency}
    Let $\theta_t$ follow Eq.~\eqref{eq: sgd dynamics linearized} around a critical point $\theta^*$ of an arbitrary loss function. Then, for any fixed $\lambda$,
    \begin{enumerate}[noitemsep,topsep=1pt, parsep=1pt,partopsep=1pt]
        \item there exists a data distribution such that $\theta_t$ is probabilistically stable but {not} $L_p$-stable;
        \item if $\theta^*$ is a saddle point and $p \geq 1$, the set of $\theta_0$ that is $L_p$-stable has Lebesgue measure zero.
    \end{enumerate}
\end{theorem}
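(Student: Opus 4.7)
The two claims are independent. Claim (1) is a construction on which the two notions of stability disagree; claim (2) is a genericity argument via Jensen's inequality, showing that along any starting direction not orthogonal to a negative-curvature eigenvector of the expected Hessian the $p$-th moment blows up exponentially. The conceptual content throughout is the contrast between the geometric mean $\exp(\E\log|1-\lambda h|)$, which controls probabilistic stability by Theorem~\ref{theo: 1d symmetry lyapunov condition}, and the arithmetic mean $\E|1-\lambda h|^p$, which controls $L_p$-stability; AM--GM allows the two to disagree by an arbitrary amount.

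\textbf{Part (1).} I will take a quadratic loss $\ell(\theta,x)=\tfrac12 h(x)(\theta-\theta^*)^2$ on $\R$, so that $P=1$ and Eq.~\eqref{eq: sgd dynamics linearized} reduces to the scalar recursion $\theta_{t+1}-\theta^* = (1-\lambda h(x_t))(\theta_t-\theta^*)$ covered by Theorem~\ref{theo: 1d symmetry lyapunov condition}. I will then place $h$ on a two-point law so that $|1-\lambda h|$ equals $2$ and $1/4$ with equal probability (this is achievable for any fixed $\lambda>0$ by solving for $h_1,h_2$). A direct calculation gives the Lyapunov exponent $\E\log|1-\lambda h| = -\tfrac12\log 2 < 0$, so Theorem~\ref{theo: 1d symmetry lyapunov condition} yields probabilistic stability. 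On the other hand, since the factors are i.i.d.,
\begin{equation*}
    \E|\theta_t-\theta^*|^p = \bigl(\E|1-\lambda h|^p\bigr)^t|\theta_0-\theta^*|^p \geq \bigl(\tfrac12\cdot 2^p\bigr)^t|\theta_0-\theta^*|^p \to \infty
\end{equation*}
whenever $\theta_0\neq\theta^*$, ruling out $L_p$-stability for every $p\geq 1$ simultaneously.

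\textbf{Part (2).} Iterating Eq.~\eqref{eq: sgd dynamics linearized} and using independence of the minibatches $\{x_s\}$ yields
\begin{equation*}
    \E[P(\theta_t-\theta^*)] = (P-\lambda P^T H P)^t\,P(\theta_0-\theta^*),\qquad H:=\E[\hat H(x)],
\end{equation*}
where $H$ is the Hessian of the expected loss at $\theta^*$. Because $\theta^*$ is a saddle, $H$ has at least one eigenvalue $-\mu<0$, and by the Type-II definition the corresponding unit eigenvector $v$ lies in $\mathrm{im}(P)$; hence $P-\lambda P^T H P$ acts on $v$ as multiplication by $1+\lambda\mu>1$. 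Since $\|\cdot\|^p$ is convex for $p\geq 1$, Jensen's inequality and $\|\theta_t-\theta^*\|\geq\|P(\theta_t-\theta^*)\|$ together give
\begin{equation*}
    \E\|\theta_t-\theta^*\|^p \geq \|\E[P(\theta_t-\theta^*)]\|^p \geq (1+\lambda\mu)^{pt}\,|\langle v,\theta_0-\theta^*\rangle|^p,
\end{equation*}
which diverges unless $\langle v,\theta_0-\theta^*\rangle = 0$. The exceptional set is a hyperplane in $\R^d$ and therefore has Lebesgue measure zero.

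\textbf{Main obstacle.} Mechanically the proof is very short once the framework is in place. The only point requiring care is verifying that the negative-curvature eigenvector $v$ of the expected Hessian actually lies in $\mathrm{im}(P)$; this should follow immediately from the Type-II saddle definition, which fixes $P$ as the projection onto the non-positive spectrum of $H$. The only other bookkeeping is pushing the Jensen bound from the projected dynamics to the full parameter norm, which is immediate from $\|v\|\geq\|Pv\|$. I do not expect conceptual difficulties beyond these.
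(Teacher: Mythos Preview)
Your proof is correct and follows the same overall strategy as the paper. For Part~(2), both you and the paper take expectations to reduce to the deterministic GD iterate $(I-\lambda H)^t(\theta_0-\theta^*)$, apply Jensen's inequality for the convex function $\|\cdot\|^p$ with $p\geq 1$, and note that the set of $\theta_0$ orthogonal to a negative-curvature eigenvector is a proper affine subspace; you are in fact more careful than the paper in tracking the projection $P$ from Eq.~\eqref{eq: sgd dynamics linearized}. For Part~(1) the idea is the same---exploit the gap between the geometric mean $\exp\E\log|1-\lambda h|$ and the arithmetic mean $\E|1-\lambda h|^p$---but the explicit construction differs: the paper places one atom of $h$ at $1/\lambda$ so that $|1-\lambda h|=0$ with probability $1/2$, making probabilistic convergence trivial (the process hits $\theta^*$ almost surely in finite time), whereas your $(2,\tfrac14)$ construction appeals directly to Theorem~\ref{theo: 1d symmetry lyapunov condition}. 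Both are valid; yours avoids a degenerate multiplier and yields a single example that works for all $p\geq 1$. One cosmetic point: your displayed lower bound $\tfrac12\cdot 2^p$ equals $1$ at $p=1$, so as written it shows non-vanishing rather than divergence there---still sufficient for ``not $L_p$-stable,'' and the exact value $\tfrac12(2^p+4^{-p})>1$ gives divergence anyway.
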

Therefore, the $L_p$-stability is not useful in understanding the stability of SGD close to saddle points. One reason is that the outliers strongly influence the $L_p$ norm in the data, whereas the probabilistic stability is robust against such outliers, a point we will discuss in the next section. This theorem points to a very special but interesting property of Type-II saddles. Namely, that studying Type-II saddles really requires some rather special notion of attractivity. While probabilistic stability is likely not the only way to study Type-II saddles, it is the most well-understood one given the prior literature on the Lyapunov exponents of random matrix products.

\subsection{Lyapunov Exponent and Probabilistic Stability}
In reality, the notion of probabilistic stability is rather vague and cannot be directly experimentally explored. This problem can be avoided by noticing that the dynamics in Eq.~\eqref{eq: type two saddle dynamics} is the same as a random matrix product:
\begin{equation}
    \theta_t - \theta^* = \prod_i^d Z_t (\theta_0- \theta^*),
\end{equation}
where $Z_t = I - \lambda \hat{H}_t$. Thanks to the celebrated Furstenberg-Kesten theorem~\cite{furstenberg1960products}, one can prove that the condition for the probabilistic attractivity of the saddle point under SGD is the same as the Lyapunov exponent of the process being negative. The \textit{maximal Lyapunov exponent} of a point $\theta^*$ is defined as 
\begin{equation}
    \Lambda = \max_{\theta_0} \lim_{t\to \infty} {\frac{1}{t}} \E \left[\log \frac{\|\theta_t - \theta^*\|} {\|\theta_0\|} \right].
\end{equation}
Here, the expectation is taken over the random samplings of the SGD algorithm. In general, $\Lambda$ does not vanish. The following theorem shows that SGD is probabilistically stable at a point if and only if its Lyapunov exponent is negative. The maximum Lyapunov exponent is initialization-independent. If $\bar{H}$ is a $d$-by-$d$ matrix, a well-known fact is that the initialization-dependent Lyapunov exponent takes at most $d$ distinctive values. It is also interesting to study the Lyapunov exponent in each of these $d$ subspaces.

\begin{theorem}\label{theo: main theorem}
    Assuming that $\Lambda \neq 0$, the linearized dynamics of SGD is probabilistically stable at $\theta^*$ for any $\theta_0$ if and only if $\Lambda < 0$.
\end{theorem}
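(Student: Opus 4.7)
The plan is to reduce the statement to a standard consequence of the Furstenberg-Kesten theorem and Oseledets' multiplicative ergodic theorem. Write the centered iterates as $\delta_t := \theta_t - \theta^* = M_t \delta_0$ with $M_t = Z_t Z_{t-1}\cdots Z_1$ and $Z_i = I - \lambda \hat H_i$ i.i.d.\ (inheriting i.i.d.-ness from the minibatch sampling). Since SGD in a neighborhood of $\theta^*$ is essentially bounded, the integrability hypotheses $\mathbb{E}[\log^+\|Z_1\|] < \infty$ of Furstenberg-Kesten hold, so $\tfrac{1}{t}\log\|M_t\| \to \Lambda$ almost surely and in $L^1$, matching the definition of $\Lambda$ given via expectations. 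Moreover Oseledets provides a deterministic filtration $\{0\} = V_{d+1}\subset V_d \subset \cdots \subset V_1 = \mathbb{R}^d$ and exponents $\Lambda = \Lambda_1 > \Lambda_2 > \cdots > \Lambda_k$ with the property that for $v \in V_j \setminus V_{j+1}$, $\tfrac{1}{t}\log\|M_t v\| \to \Lambda_j$ a.s.

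For the backward direction, assume $\Lambda < 0$. Then every Oseledets exponent satisfies $\Lambda_j \le \Lambda < 0$, so for an arbitrary initialization $\theta_0$, writing $\delta_0$ in the Oseledets decomposition gives $\tfrac{1}{t}\log\|\delta_t\| \to \Lambda_j < 0$ a.s., where $j$ is the index of the coarsest subspace containing $\delta_0$. Hence $\|\delta_t\|$ decays exponentially almost surely, and a.s.\ convergence implies convergence in probability, giving probabilistic stability at $\theta^*$ for every $\theta_0$.

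For the forward direction, I would argue by contrapositive. Suppose $\Lambda \neq 0$ and $\Lambda \not< 0$; then $\Lambda > 0$. Choose any $\theta_0$ with $\delta_0 \in V_1 \setminus V_2$, i.e.\ a generic direction — this is a full-measure, in fact open and dense, set. Then $\tfrac{1}{t}\log\|\delta_t\| \to \Lambda > 0$ a.s., so $\|\delta_t\| \to \infty$ almost surely. Suppose toward contradiction that $\delta_t \to_p 0$; then there is a subsequence along which $\delta_{t_k} \to 0$ almost surely, which contradicts the a.s.\ divergence $\|\delta_t\|\to\infty$. Hence probabilistic stability fails for this $\theta_0$, completing the contrapositive.

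The main obstacle, and where care is needed, is the step that links the expectation-based definition of $\Lambda$ used in the paper to the a.s.\ statements provided by the ergodic theorems. Specifically, one must verify the integrability condition $\mathbb{E}[\log^+\|Z_1\|] < \infty$ in the relevant linearized regime (which follows from finite-variance assumptions on $\hat H$, or simply from boundedness of the data) so that Furstenberg-Kesten and Oseledets are applicable and the $L^1$ limit matches the almost-sure limit. A secondary subtlety is the exact role of the hypothesis $\Lambda \neq 0$: it rules out the critical case where $\tfrac{1}{t}\log\|\delta_t\|\to 0$, in which neither exponential decay nor exponential growth is guaranteed and additional information (e.g.\ subexponential fluctuations, as in Kesten-type results) would be required to decide probabilistic stability.
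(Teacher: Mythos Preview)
Your approach is essentially the same as the paper's: both reduce the question to the Furstenberg--Kesten theorem applied to the i.i.d.\ product $M_t = Z_t\cdots Z_1$. For the direction $\Lambda<0\Rightarrow$ stability, the paper argues more simply via the matrix norm bound $\|M_t\delta_0\|\le \|M_t\|\,\|\delta_0\|$ together with $\tfrac{1}{t}\log\|M_t\|\to\Lambda$ a.s., so Oseledets is not needed there; your route via the filtration is correct but heavier than necessary.

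There is, however, a genuine technical slip in your contrapositive direction. For one-sided i.i.d.\ products the Oseledets filtration $V_1\supset V_2\supset\cdots$ is \emph{random} (measurable in the future $\sigma$-field), not deterministic; only the exponents and the dimensions are deterministic by ergodicity. Hence ``choose $\delta_0\in V_1\setminus V_2$'' is not a legitimate deterministic choice of initialization, and the phrase ``open and dense'' does not apply to a random subspace. The fix is easy and does not require Oseledets: since $V_2(\omega)$ is a.s.\ a proper subspace, there exists a fixed $\delta_0\neq 0$ with $\mathbb{P}(\delta_0\notin V_2)>0$, hence $\tfrac{1}{t}\log\|M_t\delta_0\|\to\Lambda>0$ with positive probability, precluding $\delta_t\to_p 0$; alternatively, from $\|M_t\|\to\infty$ a.s.\ and $\|M_t\|\le \sqrt{d}\max_i\|M_t e_i\|$ one sees that some fixed basis vector $e_i$ satisfies $\limsup_t\|M_t e_i\|=\infty$ with positive probability, which already rules out probabilistic stability for $\theta_0=\theta^*+e_i$. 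The paper's own treatment of this direction is itself quite brief (it invokes ``the trajectory that achieves the maximum Lyapunov exponent''), so with the correction above your argument is at least as complete.
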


This result can also be proved by adapting the classical results in Ref.~\cite{bougerol1992strict}. Note that for a finite dataset size, one can always choose a learning rate such that $I-\lambda\hat{H}$ is all positive with probability $1$, this further makes it possible to apply the central limit theorem of the classical Furstenberg-Kesten theory.

For a finite-size dataset, it is straightforward to find an upper and lower bound for $\Lambda$. we can define $r_{\max}$ to be larger than the absolute value of the eigenvalues of $I-\lambda \hat{H}(x)$ for all $x$ (which exists because there is only finitely many $x$). Similarly, we can define $r_{\min}>0$ to be smaller than all the absolute values of all the eigenvalues of $I-\lambda \hat{H}(x)$ for all $x$. Therefore,
\begin{equation}
    \log r_{\min}<  \Lambda < \log r_{\max}.
\end{equation}
While it is in general challenging but worthwhile to theoretically estimate the exponent \cite{crisanti2012products, pollicott2010maximal, jurga2019effective}, it is a quantity that can be easily experimentally studied, and studying how Lyapunov exponents change with respect to hyperparameters of training algorithms.

Now, we give two quantitative estimates about when the Lyapunov exponent will be negative. This discussion also implies a sufficient but weak condition for a general type of multidimensional dynamics to converge in probability. Let $h^*(x)$ be the largest eigenvalue of $\hat{H}(x)$ and assume that $1-h^*(x)>0$ for all $x$. Then, the following condition implies that ${\theta}\to_p 0$: $\E_{x}[\log|1-\lambda h^*(x)|]<0$, which mimics the condition we found for rank-$1$ systems. An alternative estimate can be made by approximating every $\hat{H}(x)$ as a diagonal matrix, which is a common approximation in the literature that requires computing the Hessian or Fisher information of the neural networks. If $\hat{H}$ has rank $d$, {t}his reduces the problem to $d$ separated rank-1 dynamics, and Theorem~\ref{theo: 1d symmetry lyapunov condition} again gives the exact solutions in each subspace. Numerical evidence shows that the diagonal approximation quite accurately predicts the onset of low-rank behavior the actual rank (see Section~\ref{sec: phases of learning sgd} and Appendix~\ref{app: critical lr}). 

Another potential concern is whether this theorem is trivial for SGD at a high dimension in the sense that it could be the case that $\Lambda$ could be identically zero independent of the dataset. One can show that the Lyapunov exponent is generally nonzero for all datasets that satisfy a mild condition. Let $\E[\hat{H}]$ be full rank. By definition, 
\begin{align*}
    \Lambda &= \lim_{t\to\infty}\frac{1}{t}\E\left[\log \weight_0^T FF^T \weight_0 \right] = -\frac{2\lambda \weight_0^T  H \weight_0}{\|\weight_0\|^2} + O (\lambda^2),
\end{align*}
where $F= \prod_j^t (I - \lambda\hat{H}_{i_j})$.

Therefore, as long as $\lambda$ is sufficiently small, the sign of the Lyapunov exponent is opposite to the sign of the eigenvalues of $H$. This proves something quite general for SGD at an interpolation minimum: with a small learning rate, the model converges to the minimum exponentially fast, in agreement with common analysis in the optimization literature. See Figure~\ref{fig:first phase diagram}-right for the numerical computation of Lyapunov exponents of a matrix factorization problem and the corresponding phases.

\section{Application and Experiment}
\label{sec: exp}

\subsection{Solving lasso with SGD}

As a simple illustrative example, we first apply our theory to \textit{spred}, a recent algorithm for solving the lasso problem with gradient descent \cite{ziyin2023sparsity}. Here, the target function is $\ell =  ((u \odot w)x - y)^2$, where $\odot$ is the element-wise product, $u,\ w,\ ,x\in \mathbb{R}^{200}$, and $y\in\mathbb{R}$. It has been proved that all local minima of this loss function is identitcal to the unique solution of lasso, and thus have the same level of sparsity. We choose the dataset such that the lasso solution has $0.5$ sparsity. The solutions sparser than $50\%$ are the saddle points where $u_i =w_i = 0$ for some $i$. Apparently, each of these points is a Type-II saddle because the gradient of $w_i$ is proportional to $u_i$ and vice versa. The theory suggests that SGD is attracted to these saddle points at a large learning rate. See Figure~\ref{fig: sgd spred}.

\begin{wrapfigure}{r}{0.35\linewidth}
    \includegraphics[width=\linewidth]{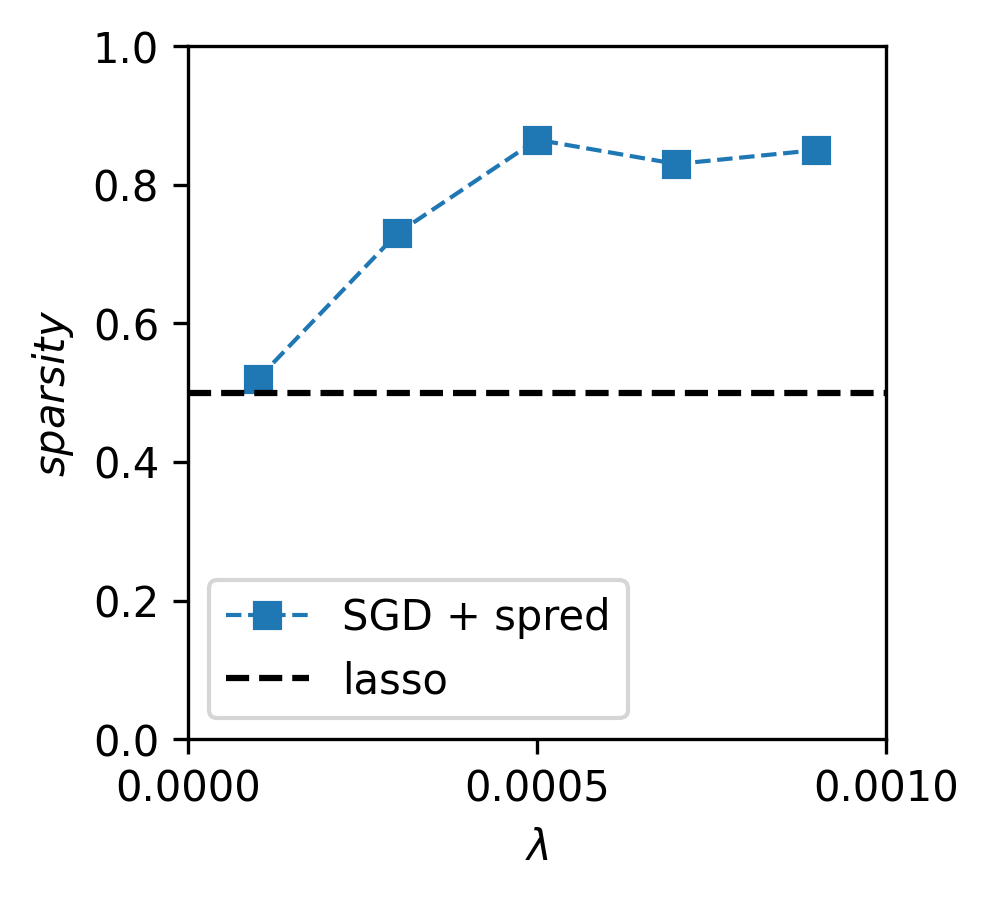}
    \caption{\small At a small learningrate, the \textit{spred} algorithm \cite{ziyin2023sparsity} converges to the ground truth solution of lasso. At a large learning rate, however, it is biased towards sparser solutions than the ground truth because the sparser solutions are Type-II saddles and thus attractive for SGD at a large learning rate.}

    \label{fig: sgd spred}
\end{wrapfigure}
\subsection{Comparison with moment stability}

Conventionally, it is thought that SGD can only select a local or global minimum as a solution. Our result suggests that saddle points can also be selected as solutions. In this section, we present one such case study that directly compares the attractivity of fixed points predicted by the probabilistic stability and the $L_p$ stability. We consider a two-layer network with a single hidden neuron with the swish activation function: $f(w,u,x) = u \times {\rm swish}(wx)$, where ${\rm swish}(x) = x \times {\rm sigmoid}(x)$. We generate $100$ data points $(x,y)$ as $y = 0.1 {\rm swish}(x) + 0.9\epsilon$, where both $x$ and $\epsilon$ are sampled from normal distributions. See Figure~\ref{fig:select minimum} for an illustration of the training loss landscape. There are two local minima: solution A at roughly $(-0.7, -0.2)$ and solution B at $(1.1, -0.3)$. Here, the solution with better generalization is A because it captures the correct correlation between $x$ and $y$ when $x$ is small. Solution A is also the sharper one; its largest Hessian eigenvalue is roughly $h_a = 7.7$. Solution B is the worse solution; it is also the flatter one, with the largest Hessian value being $h_b=3.0$. There is also a saddle point C at $(0,0)$, which performs significantly better than B and slightly worse than A in terms of generalization.

If we initialize the model at A, $L_2$ stability theory would predict that as we increase the learning rate, the solution moves from the sharper minimum A to the flatter minimum B when SGD loses $L_2$ stability in A; the model would then lose total stability once SGD becomes $L_2$-unstable at B. As shown by the red arrows in Figure~\ref{fig:select minimum}. In contrast, probabilistic stability predicts that SGD will move from A to C as C becomes attractive and then lose stability, as the black arrows indicate. See the right panel of the figure for the comparison with the experiment for the model's generalization performance. The dashed lines in the middle and right panels show the predictions of the $L_2$ stability and probabilistic theories, respectively. We see that the probabilistic theory predicts both the error and the place of transition right, whereas $L_2$ stability neither predicts the right transition nor the correct level of performance. If we initialize at B, the flatter minimum, $L_2$ stability theory would predict that the solution will only have one jump from B to divergence as we increase the learning rate. Thus, from $L_2$ stability, SGD would have roughly the performance of B until it diverges, and having a large learning rate will not help increase the performance. In sharp contrast, the probabilistic stability predicts that the solution will have two jumps: it stays at B for a small $\lambda$ and jumps to C as it becomes attractive at an intermediate learning rate. The model will ultimately diverge if C loses stability. Thus, our theory predicts that the model will first have a bad performance, then show a better performance at an intermediate learning rate, and finally diverge. See the middle panel of Figure~\ref{fig:select minimum}. We see that the prediction of the probabilistic stability agrees with the experiment and correctly explains why SGD leads to better performance.

\begin{figure*}
    \centering
    \includegraphics[width=\linewidth]{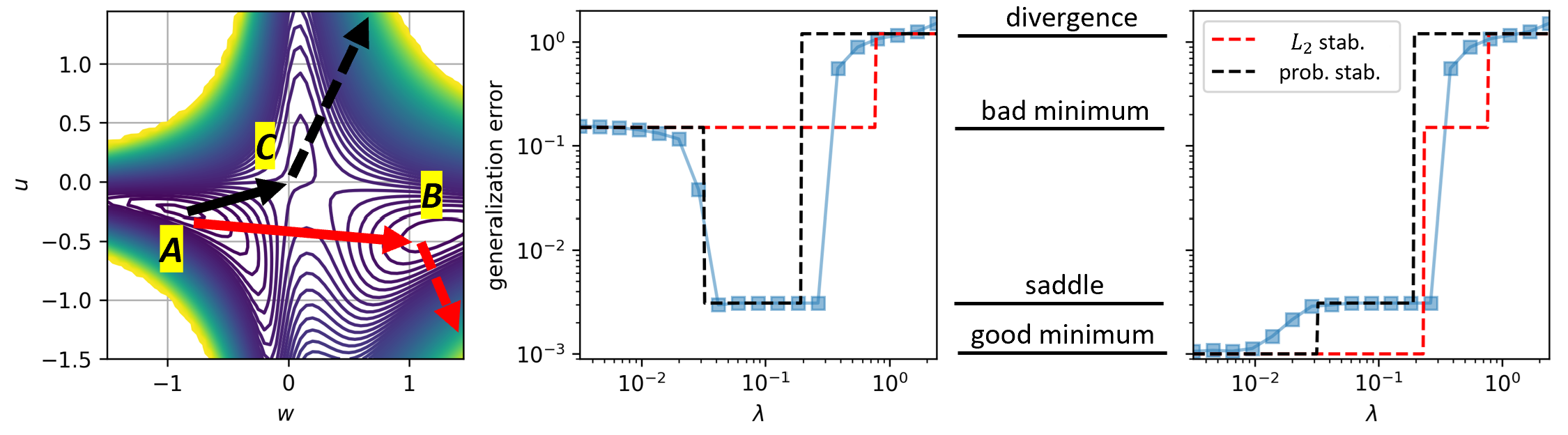}
    \caption{\small \textbf{How SGD selects a solution}. \textbf{Left}: The landscape of a two-layer network with the swish activation function \cite{ramachandran2017searching}. The black arrow corresponds to the experimental trajectory and the prediction of probabilistic stability, while the red arrow corresponds to the (false) prediction of the $L_2$ stability. \textbf{Middle, Right}: the generalization performance of the model for different learning rates. \textbf{Middle}: Initialized at solution B, SGD first jumps to C and then diverges. \textbf{Right}: Initialized at A, SGD also jumps to C and diverges. In both cases, the behavior of SGD agrees with the prediction of the probabilistic stability instead of the $L_2$ stability. Instead of jumping between local minima, SGD, at a large learning rate, transitions from minima to saddles.}
    \label{fig:select minimum}
\end{figure*}

\subsection{Phases of Learning}\label{sec: phases of learning sgd}

A good observation is that almost all neural networks are initialized close to a Type-II saddle. Let 
$f(x) = W^{(D)}\sigma(W^{(D-1)}\sigma(...\sigma(W^{(1)}x + b^{(1)})) + b^{(D-1)})$
be a generic 
neural network with 
depth $D$ and activation $\sigma$, and 
$W^{(D)}\in\mathbb{R}^{d_{D-1}\times d_y}$, $W^{(i)}\in\mathbb{R}^{d_{i-1}\times d_i}$, and $W^{(1)}\in\mathbb{R}^{d_{x}\times d_1}$ are of arbitrary dimensions that match the input and output dimensions. Here, $\sigma(x) = c_0 x + O(x^2)$ is any nonlinearity that is locally linear at $x=0$. Let $y=y(x)$ be the label of $x$. Assuming that the per-sample loss is differentiable, one can prove the following result, which is relevant for the standard small initialization of neural networks.
\begin{theorem}\label{theo: type II saddle at init}
    For any data distribution, loss function $\ell$, and any $\sigma$, the point $(u_i, w_i) = (0, 0)$ is a Type-II saddle of $\ell(f(x),y(x))$ if it is not a local minimum. 
\end{theorem}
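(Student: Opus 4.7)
The plan is to verify both conditions of the Type-II saddle definition at the configuration $\theta^*$ where the outgoing column $u_i$ and the incoming row $w_i$ (inclusive of the bias $b_i^{(k)}$, or with that bias separately set to zero) of a single hidden neuron $i$ at some layer $k$ are zero, while all other parameters are arbitrary. Two ``zero factors'' do essentially all of the work: (a) $w_i = 0$ combined with $\sigma(x) = c_0 x + O(x^2)$ forces the activation $h_i^{(k)} = \sigma(0) = 0$, and (b) $u_i = 0$ forces the back-propagated signal $\partial \ell / \partial h_i^{(k)} = u_i^\top (\partial \ell / \partial z^{(k+1)})$ to vanish. Condition (2) (non-minimality) is directly assumed in the hypothesis, so the substantive task is Condition (1): exhibit a projection $P$ onto a subspace of the non-positive eigenvalue subspace of $H(\theta^*)$ that annihilates $\nabla_\theta \ell(\theta^*, x)$ for every $x$.

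For the gradient, the chain rule gives $\partial_{u_i}\ell = (\partial\ell/\partial z^{(k+1)})\,h_i^{(k)} = 0$ by (a) and $\partial_{w_i}\ell = (\partial\ell/\partial h_i^{(k)})\,\sigma'(0)\,h^{(k-1)} = 0$ by (b), so the per-sample gradient vanishes on $V := \operatorname{span}(u_i, w_i)$ for every $x$. I would then compute the per-sample Hessian entries touching $V$ at $\theta^*$: (i) the diagonal blocks $\partial^2_{u_i u_i}\ell$ and $\partial^2_{w_i w_i}\ell$ each still carry a factor of $h_i^{(k)}$ or $u_i$ and therefore vanish; (ii) every cross-entry $\partial^2_{u_i \theta_{\text{other}}}\ell$ and $\partial^2_{w_i \theta_{\text{other}}}\ell$ with $\theta_{\text{other}} \notin V$ also vanishes, since after differentiating through a term carrying $h_i^{(k)}$ or $\partial \ell/\partial h_i^{(k)}$ at least one of the two zero factors survives, using additionally that $\partial h_i^{(k)}/\partial \theta_{\text{other}} = 0$ whenever $\theta_{\text{other}}$ lies in layers below $k$ (this derivative is proportional to $w_i = 0$); (iii) the only surviving entry inside $V$ is the off-diagonal $\partial^2_{u_i w_i}\ell = c_0\,(\partial\ell/\partial z^{(k+1)})\,h^{(k-1)\top}$. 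Taking expectation over $x$ preserves this structure, so the expected Hessian $H(\theta^*)$ is block-diagonal over $V \oplus V^\perp$, with $H(\theta^*)|_V$ of the anti-diagonal form $\begin{pmatrix} 0 & M \\ M^\top & 0 \end{pmatrix}$ whose spectrum is $\pm \sigma_j(M)$ and hence contains non-positive eigenvalues.

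Choosing $P$ to project onto the non-positive half of $H(\theta^*)|_V$, a subspace contained both in $V$ and in the non-positive eigenvalue subspace of $H(\theta^*)$, then gives $P\,\nabla_\theta \ell(\theta^*, x) = 0$ for all $x$ by the earlier gradient computation, closing Condition (1) and thus the proof. The main obstacle, in my view, is almost entirely definitional bookkeeping: one has to pin down what ``$(u_i, w_i) = (0,0)$'' means for a generic deep network with biases (incoming row plus bias together, or with the corresponding bias set to zero separately) so that mechanisms (a) and (b) hit exactly the right preactivation and back-signal. Once that convention is fixed, the proof reduces to two chain-rule identities and the elementary spectral remark that any block-antidiagonal symmetric matrix has a spectrum symmetric about zero.
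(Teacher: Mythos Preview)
Your reading of the theorem differs from the paper's. The paper proves the statement at the origin $\theta=0$ of the \emph{full} parameter vector (all $W^{(i)}$ and $b^{(i)}$ equal to zero), not at a configuration where a single neuron's incoming row $w_i$ and outgoing column $u_i$ vanish while the remaining parameters are arbitrary. Under the linear approximation $\sigma(x)\approx c_0 x$, every entry of $\nabla_\theta f(x)$ is a product containing at least one other weight matrix, so at $\theta=0$ the entire per-sample gradient $\nabla_\theta\ell(\theta,x)$ is identically zero for every $x$. Condition~(1) then holds for the canonical $P$ automatically, because any projection of the zero vector is zero; no Hessian analysis is needed.

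Your argument, by contrast, aims at a strictly stronger single-neuron statement, and there it has a genuine gap. In the paper's definition, $P$ is \emph{the} projector onto the full non-positive eigenspace of $H(\theta^*)$; it is not a projector you are free to choose. Your block-diagonal decomposition $H=H|_V\oplus H|_{V^\perp}$ is correct, and your observation that $H|_V$ has spectrum symmetric about zero is fine. But $P$ also picks up every non-positive eigendirection of $H|_{V^\perp}$. Since the ``other'' parameters are arbitrary (in particular, not assumed to sit at a critical point), $H|_{V^\perp}$ can have a negative eigenvalue along a direction $v\in V^\perp$ on which $v^\top\nabla_\theta\ell(\theta^*,x)\neq 0$ for some $x$. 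In that case $P\nabla_\theta\ell(\theta^*,x)\neq 0$, and Condition~(1) fails. So the step ``choosing $P$ to project onto the non-positive half of $H(\theta^*)|_V$'' is not licensed by the definition, and without it the single-neuron version you are trying to prove does not go through as written.

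If you specialize your argument to $\theta^*=0$, all of your chain-rule identities still hold, the full gradient vanishes, and the Hessian bookkeeping becomes unnecessary; that is exactly the paper's route.
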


The details of the proof can be found in section~\ref{ss:proof type II init}. Now, we solve an analytical model that exhibits complex and dramatic phase transition-like behaviors as we change the learning rate of SGD. We will see that the solutions we obtained from this model are relevant for understanding the initialization of real neural networks.

\begin{wrapfigure}{r}{0.38\linewidth}
    \includegraphics[width=\linewidth]{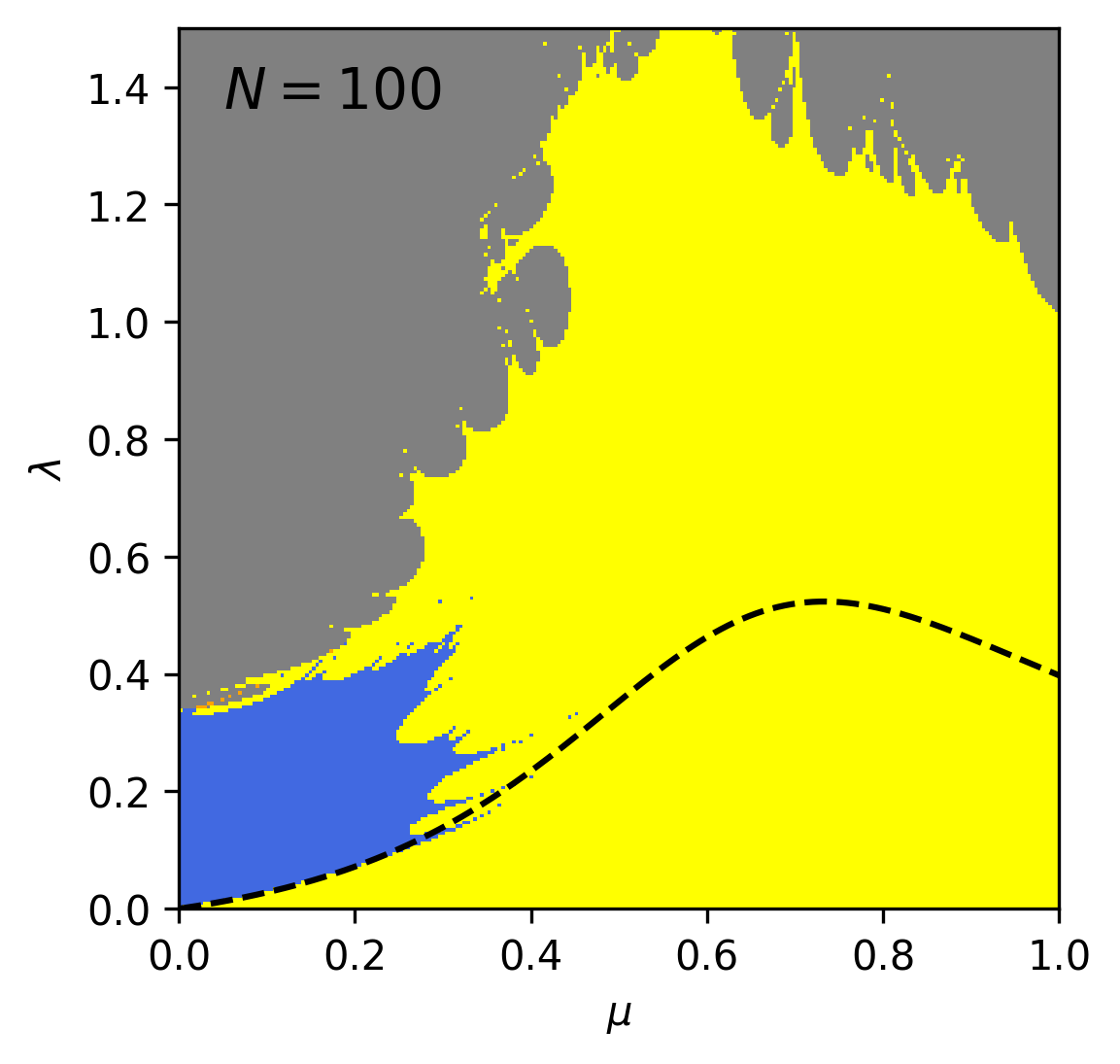}
    \caption{\small Phase diagram at $N=100$. The colors indicate the phases as in Figure~
    \ref{fig:first phase diagram}. At a finite-size, the phase boundaries have a fractal-like structure. The bottom-left of the phase diagram has a smooth boundary and is shared across almost all phase diagrams we plotted.}
    \label{fig: phase diagram 100 main}
\end{wrapfigure}

As an analytically tractable model, let the loss function be $\E_x[(\sum_i u^{(i)} \sigma(w^{(i)} x) - y)^2/2]$. We let $c_0=1$ and both $x$ and $y$ be one-dimensional. At $u,\ w\approx 0$, the model $u^Tw$ is either rank-$1$ or rank-$0$. The rank-$0$ point where $u^{(i)}=w^{(i)}=0$ for all $i$ is a saddle point as long as $\E[xy] \neq 0$. Consider the linearized dynamics around the saddle at $w^{(i)}=u^{(i)}=0$. The expanded loss function takes the following form: $\ell(u,w {; x, y}) = -{xy} \sum^d_i u^{(i)}w^{(i)} + const.$ For learning to happen, SGD needs to escape from the saddle point. Let us consider a simple data distribution where $xy=1$ and $xy=a$ with equal probability. When $a>-1$, \textit{correct} learning happens when ${\rm sign}(w)={\rm sign}(u)$. We thus focus on the case of $a>-1$. The case of $a<-1$ is symmetric to this case up to a rescaling. This example is already presented in Figure~\ref{fig:first phase diagram}. {There are five phases of learning in this simple example}

\begin{itemize}[leftmargin=10pt, noitemsep,topsep=0pt, parsep=1pt,partopsep=1pt]
    \item Ia. correct learning with prob. and norm stability ($w_t - u_t \to_{L_2} 0 $, $w_t + u_t$ diverges);
    \item Ib. correct learning with prob. but not norm stability ($w_t - u_t \to_{p} 0$, $w_t - u_t \not\to_{L_2} 0$,  $w_t + u_t$ diverges);
    \item II. incorrect learning under probabilistic stability ($w_t - u_t$ diverges, $w_t + u_t \to_p 0$);
    \item III. convergence to low-rank saddle point ($w_t - u_t \to_{p} 0$, $w_t + u_t \to_{p} 0$);
    \item IV. completely unstable ($w_t + u_t$, $w_t - u_t$ diverges in p.).
\end{itemize}

Two aspects agree well with the theory: (1) SGD can indeed converge to low-rank saddle points; however, this happens only when the gradient noise is sufficiently strong and when the learning rate is large (but not too large); (2) the region for convergence to saddles (region III) is exclusive with the region for convergence in mean square (Ia), and thus one can only understand the saddle-seeking behavior of SGD within the proposed probabilistic framework. {Let $B$ denote a mini-batch and $S$ be its cardinality. The following proposition precisely characterizes the phase boundaries.} 

\begin{proposition}\label{prop: phase diagram}
    For any $w_0, u_0 \in \mathbb{R}/\{0\}$. $w_t - u_t \to_p 0$ if and only if $\E_{B}[\log|1-\lambda \sum_{(x, y)\in B}xy {/S}|] <0$. $w_t + u_t$ converges to $0$ in probability if and only if $\E_{B}[\log|1+\lambda {\sum_{(x, y)\in B}} xy {/S}|] <0$.
\end{proposition}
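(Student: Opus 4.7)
The plan is to reduce the proposition directly to Theorem~\ref{theo: 1d symmetry lyapunov condition} via a linear change of variables that diagonalizes the two-dimensional SGD dynamics into two independent one-dimensional multiplicative processes. First I would write out the per-sample SGD update for the linearized loss $\ell(u,w;x,y) = -xy\sum_i u^{(i)} w^{(i)}$: the gradient of the mini-batch loss with batch $B_t$ of size $S$ is $-\xi_t w^{(i)}$ in $u^{(i)}$ and $-\xi_t u^{(i)}$ in $w^{(i)}$, where $\xi_t := \frac{1}{S}\sum_{(x,y)\in B_t} xy$. Hence the SGD step reads
\begin{equation*}
u^{(i)}_{t+1} = u^{(i)}_t + \lambda \xi_t\, w^{(i)}_t, \qquad w^{(i)}_{t+1} = w^{(i)}_t + \lambda \xi_t\, u^{(i)}_t.
\end{equation*}
Introducing the rotated coordinates $s_t := w_t - u_t$ and $r_t := w_t + u_t$ (suppressing the index $i$, since the dynamics decouple across $i$), direct substitution causes the cross-terms to either cancel or add coherently, yielding the two scalar recursions
\begin{equation*}
s_{t+1} = (1-\lambda \xi_t)\, s_t, \qquad r_{t+1} = (1+\lambda \xi_t)\, r_t.
\end{equation*}

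Each of these is precisely a copy of the one-dimensional dynamics of Eq.~\eqref{eq: 1d symmetry dynamics} with fixed point $\theta^* = 0$: for $s_t$ the scalar eigenvalue is $h(x)=\xi$, and for $r_t$ it is $h(x)=-\xi$. Because mini-batches are drawn i.i.d.\ across $t$, the sequence $\{\xi_t\}$ is i.i.d., so Theorem~\ref{theo: 1d symmetry lyapunov condition} applies verbatim to each of the two processes. It delivers $s_t \to_p 0$ iff $\E_B[\log|1-\lambda \xi|] < 0$ and $r_t \to_p 0$ iff $\E_B[\log|1+\lambda \xi|] < 0$, which are exactly the two statements of the proposition.

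I do not expect any real obstacle beyond housekeeping. The ``iff'' needs the generic condition $w_0 \neq \pm u_0$ so that $s_0 \neq 0$ and $r_0 \neq 0$: otherwise the corresponding process is identically zero and converges trivially, independent of the Lyapunov sign, which would formally violate the ``only if'' direction on a measure-zero slice of $(\mathbb{R}\setminus\{0\})^2$. One should also verify the integrability condition $\E_B[|\log|1\pm\lambda\xi||] < \infty$ used in the strong law of large numbers underlying Theorem~\ref{theo: 1d symmetry lyapunov condition}; for a finite dataset this is automatic provided $\lambda \xi \neq \mp 1$ almost surely, which excludes at most a measure-zero set of learning rates. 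With those caveats noted, the proof is essentially a one-line diagonalization followed by an invocation of Theorem~\ref{theo: 1d symmetry lyapunov condition}.
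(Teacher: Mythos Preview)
Your proposal is correct and follows essentially the same approach as the paper: the paper also introduces the rotated variables $h_t=w_t+u_t$ and $m_t=w_t-u_t$, obtains the identical decoupled one-dimensional recursions, and invokes Theorem~\ref{theo: 1d symmetry lyapunov condition} for each. Your added caveats about the degenerate initialization $w_0=\pm u_0$ and the integrability condition are not mentioned in the paper's proof but are reasonable points of care.
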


This shows that the phase diagram of SGD strongly depends on the data distribution, and it is interesting to explore and compare a few different settings. Now, we consider a size-$N$ Gaussian dataset. Let $x_i \sim \mathcal{N}(0, 1)$ and noise $\epsilon_i \sim \mathcal{N}(0, 4)$, and generate a noisy label $y_i=\mu x_i + (1-\mu) \epsilon_i$. See the phase diagram for this dataset in Figure~\ref{fig: phase diagram 100 main}. We see that the phase diagram has a rich structure at a finite size. There are three rather surprising observations about the phase diagrams: (1) as $N\to \infty$, the phase diagram becomes smoother and smoother and each phase takes a connected region (cf. experiments in Appendix~\ref{app sec: finite size}); (2) phase II seems to disappear as $N$ becomes large; (3) the lower part of the phase diagram seems universal, taking the same shape for all samplings of the datasets and across different sizes of the dataset. This suggests that the convergence to low-rank structures can be a universal aspect of SGD dynamics, which corroborates the widely observed phenomenon of collapse in deep learning \cite{papyan2020prevalence, wang2022posterior, tian2022deep}. The theory also shows that if we fix the learning rate and noise level, increasing the batch size makes it more and more difficult to converge to the low-rank solution (see Figure~\ref{fig:resnet18 different batchsize}, for example). This is expected because the larger the batch size, the smaller the effective noise in the gradient.

\subsection{Initialization of Deep Neural Networks}

As Theorem~\ref{theo: type II saddle at init} shows, almost all neural networks are close to Type-II saddles at initialization. This means that they will also have significant problems in initiating training if the SNR in the gradient is small. To verify this, we construct and modify datasets to exhibit different SNR during training for deep nonlinear networks.

We start with a controlled experiment where, at every training step, we sample input $x \sim \mathcal{N}(0, I_{200})$ and noise $\epsilon \sim \mathcal{N}(0, 4I_{200})$, and generate a noisy label $y=\mu x + (1-\mu) \epsilon$. Note that $1-\mu$ controls the level of the noise. Training proceeds with SGD on the MSE loss. We train a two-layer model with the architecture: $200 \to 200 \to 200$. See Figure~\ref{fig: nn low rank} for the theoretical phase diagram, which is estimated under the diagonal approximation. Under SGD, the model escapes from the saddle with a finite variance to the right of the dashed line and has an infinite variance to its left. In the region $\lambda \in (0, 0.2)$, this loss of the $L_2$ stability condition coincides with the condition for the convergence to the saddle. The experiment shows that the theoretical boundary agrees well with the numerical results. {The Adam optimizer \cite{journals/corr/KingmaB14_adam} also have a similar phase diagram. See Appendix~\ref{app sec: exp}. This suggests that the effects we studied are rather universal, not just a special feature of SGD.}

Then, we train independently initialized ResNets on CIFAR-10 with SGD. The training proceeds with SGD without momentum at a fixed learning rate and batch size $S=32$ (unless specified otherwise) for $10^5$ iterations. Our implementation of Resnet18 contains $11$M parameters and achieves $94\%$ test accuracy under the standard training protocol, consistent with the established values. To probe the effect of noise, we artificially inject a dynamical label noise during every training step, where, at every step, a correct label is flipped to a random label with probability $noise$, and we note that the phase diagrams are similar regardless of whether the noise is dynamical or static (where the mislabelling is fixed). See Figure~\ref{app fig: cifar static} for the phase diagram of static label noise. Interestingly, the best generalization performance is achieved close to the phase boundary when the noise is strong. This is direct evidence that SGD noise has a strong regularization effect on the trained model. We also study the sparsity of the ResNets in different layers in Figure~\ref{app fig: resnet sparsity static}, and we observe that the phase diagrams are all qualitatively similar. Also, see Appendix~\ref{app sec: exp} for the experiment with a varying batch size. These experiments show that Type-II saddles are indeed a major obstacle in the initial phase of training. This result may also explain why the warmup is needed for training transformers. Training transformers is only successful if one starts with a small learning rate and increases it slowly toward a maximal value. If the training starts at a large learning rate, one often observes that the model becomes trapped in a low-capacity state soon after initialization. This agrees with the theoretical expectation that a small learning rate has a larger SNR for training.

\begin{figure*}[t!]
    \centering
    \includegraphics[width=0.40\linewidth]{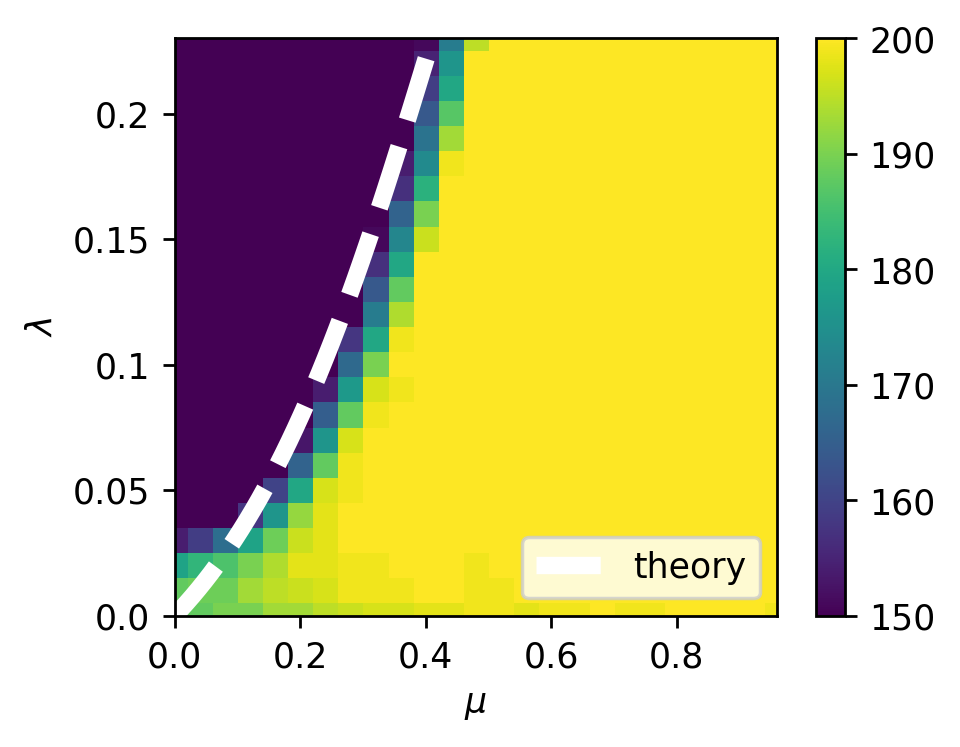}
    \includegraphics[width=0.40\linewidth]{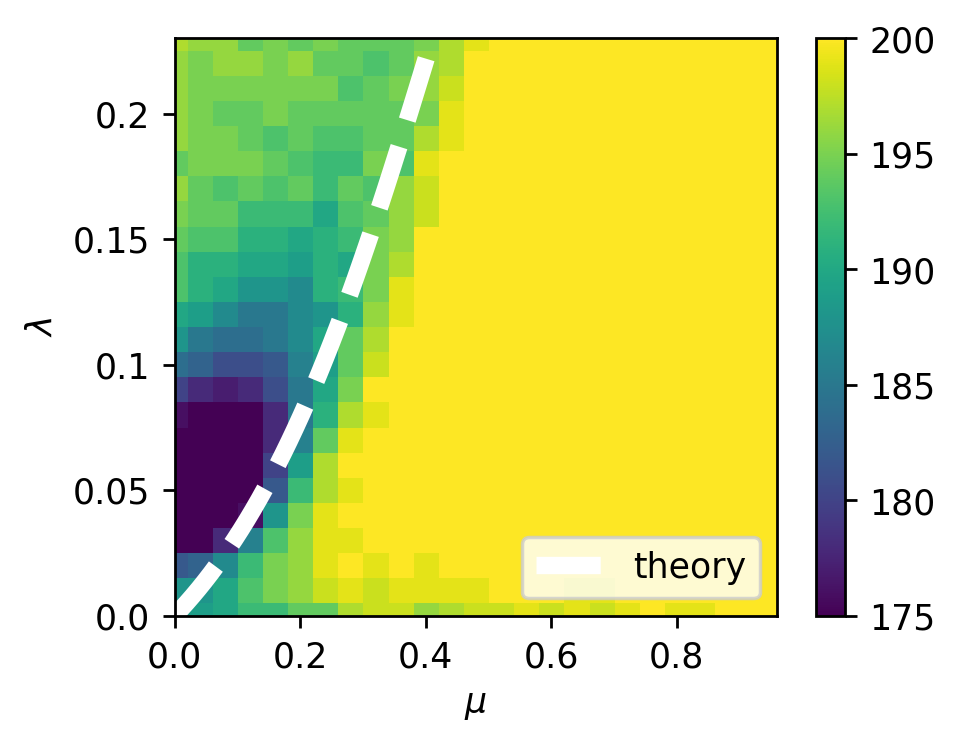}
    \caption{\small Convergence to low-rank solutions in nonlinear neural networks. At every training step, we sample input $x \sim \mathcal{N}(0, I_{200})$ and noise $\epsilon \sim \mathcal{N}(0, \sqrt{2}I_{200})$, and generate a noisy label $y=\mu x + (1-\mu) \epsilon$, where $1-\mu$ controls the level of the noise. We compute the rank of the second layer of the weight matrix after training. \textbf{Left}: Linear network. \textbf{Right}: tanh network. The white dashed line shows the theoretical prediction of the appearance of low-rank structure computed by numerically integrating the condition in Proposition~\ref{prop: phase diagram}.}
    \label{fig: nn low rank}

    \centering
    \includegraphics[width=0.40\linewidth]{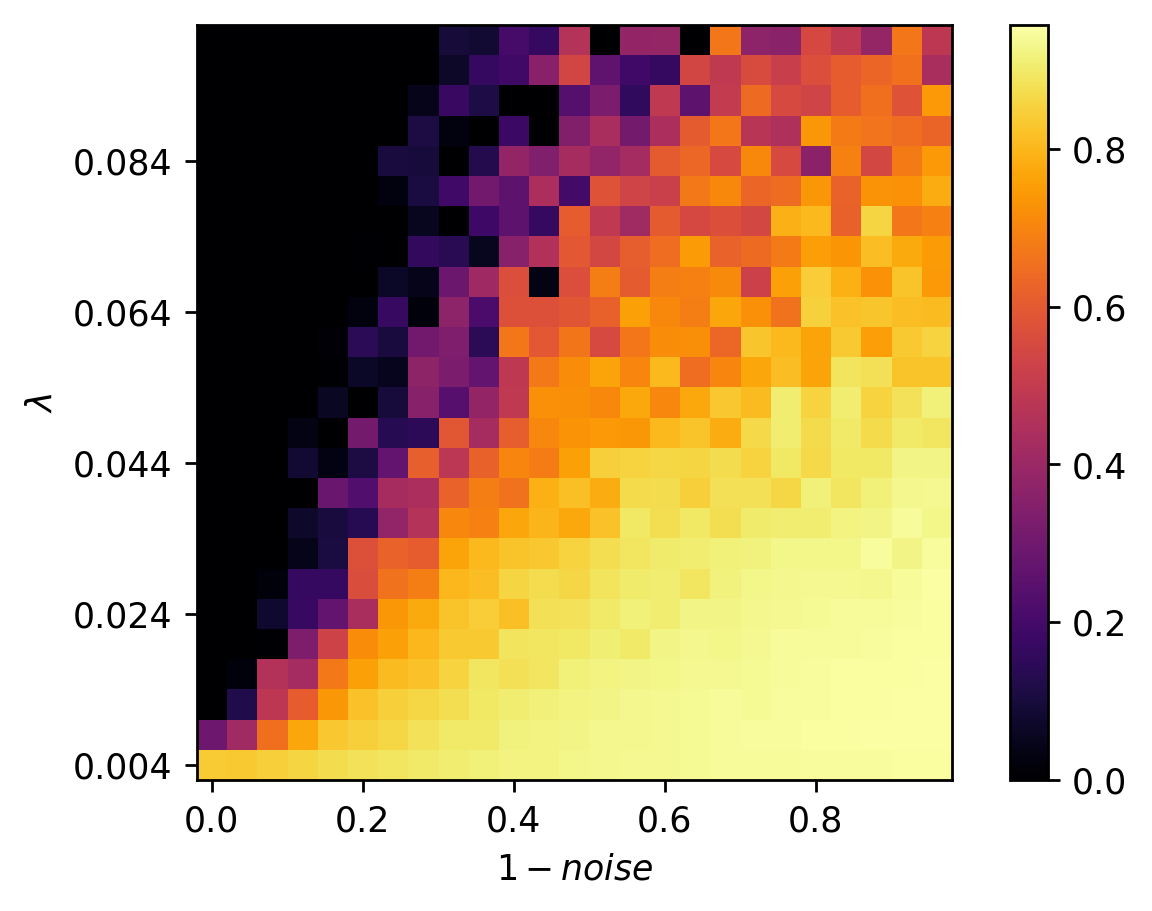}
    \includegraphics[width=0.40\linewidth]{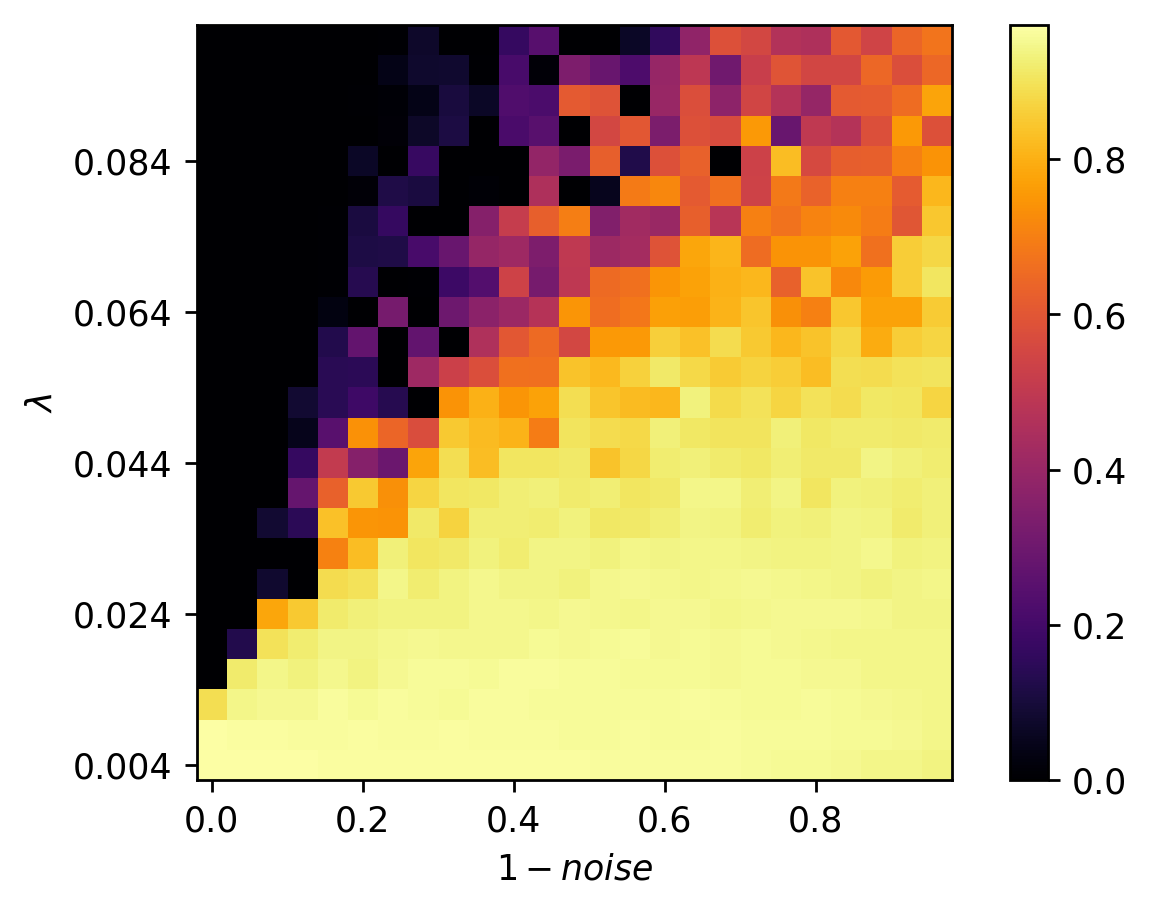}

    \caption{\small Density ($1-sparsity$) of the convolutional layers in a ResNet18, when there is static noise (mislabeling) in the training data. Here, we show the number of sparse parameters in the two of the largest convolutional layers, each containing roughly one million parameters in total. The figures respectively show layer1.1.conv2 (upper left), layer2.1.conv2 (upper right).}
    \label{app fig: resnet sparsity static}

\end{figure*}

\section{Discussion}

We have shown that special attention should be paid to the study of Type-II saddle points in neural networks. To this purpose, we demonstrate that the probabilistic stability serves as an essential notion for understanding the attractivity of Type-II saddle points as no moment-based notions of stability can be used to study when a saddle point becomes attractive. These effects are only present for SGD and not for GD, demonstrating that the algorithmic regularization due to SGD is qualitatively different from that of GD. We also link the saddle point problem to the conventional study of Lyapunov exponents in ergodic theory and dynamical systems, both of which can now be leveraged for us to design better algorithms of training. Our result also sheds light on how a solution in the loss landscape is chosen by SGD. In the probabilistic-stability perspective, SGD performs a selection between converging to saddles and to local minima, not between sharp minima and flat ones. A main limitation of our work is that we only considered simple analytical models on the theory side, but we believe they serve as good examples to connect contemporary deep learning research to the conventional fields of dynamical systems.

We have also suggested a new approach to the saddle point problem in neural networks. Depending on how one views the problem of Type-II saddles, there are two interesting future directions. If we regard the Type-II saddles as major obstacles to optimization, an important future problem is to design algorithms to better escape from them or to remove them from the loss landscape. Alternatively, the type-II saddles can function as effective capacity constraints on the models, and one might also be interested in leveraging them to better regularize neural networks.

\section*{Acknowledgment}

Liu Ziyin is supported by the JSPS fellowship during the writing of this paper. The research of B. Li is funded by the Chair \textit{Capital Markets Tomorrow: Modeling and Computational Issues} under the aegis of the Institut Europlace de Finance,  a joint initiative of Laboratoire de Probabilit\'es, Statistique et Modélisation (LPSM) / Université Paris Cit\'e and  Cr\'edit Agricole CIB, with the support of Labex FCD (ANR-11-LABX-0019-01). This work was supported by KAKENHI Grant No. JP22H01152 from the Japan Society for the Promotion of Science. We gratefully acknowledge the support from the CREST program ``Quantum Frontiers" (Grant No. JPMJCR23I1) by the Japan Science and Technology Agency.

\bibliography{ref}
\bibliographystyle{abbrv}

\newpage
\appendix
\onecolumn

\section{Experimental Concerns}\label{app sec: exp}

\subsection{Setting for Figure~\ref{fig: two types of saddles}}\label{app sec: figure 1 setting}
The following derivation gives the construction of the Type-I and Type-II saddle points in Figure~\ref{fig: two types of saddles}. Let us first study the case when there is a single neuron. The network is 
\begin{equation}
    f(x,\theta) = \sum_i u_i \sigma(w_i^Tx),
\end{equation}
where $\sigma={\rm ReLU}$ is the activation, and $x$ contains an element of $1$ to include the effect of having a bias. 

The loss function is $L = \frac{1}{N}\sum_x \ell(x)$, where
\begin{equation}
    \ell(x) = (f(x) -y(x))^2
\end{equation}
Without loss of generality, we can write $\E_x$ in the place of $\frac{1}{N}\sum_x$, where the expectation is taken over the training set. Let $\sigma ={\rm ReLU}$. We have that $\sigma''=0$. The gradients are 
\begin{equation}
    \nabla_{u} \ell(x) = \mathbbm{1}_> w^T x (uw^T x - y),
\end{equation}
\begin{equation}
    \nabla_{w} \ell(x) = \mathbbm{1}_> u  (uw^T x - y) x.
\end{equation}
where we have used $\mathbbm{1}_>$ as an indicator function of the event $w^Tx \geq 0$. The Hessian is 
\begin{equation}
    H(x) = \begin{bmatrix} \mathbbm{1}_>(w^Tx)^2   & \mathbbm{1}_> w^T x u x + \mathbbm{1}_> (u (w^T x) - y)x^T \\
     \mathbbm{1}_> w^T x u x +\mathbbm{1}_> (u (w^T x) - y)x & 
    \mathbbm{1}_> u^2  xx^T
    \end{bmatrix}
\end{equation}
There are two cases: (1) $u =0$ and (2) $u\neq 0$. 

For the case $u=0$, we have that $\nabla_w L= 0$. For $\nabla_u L =0$, we must have
\begin{equation}
    \nabla_{u} L = \E [\mathbbm{1}_> w^T x (- y)] = 0
\end{equation}
which is solved by any $w$ such that $w^T\E [\mathbbm{1}_> xy] =0$. These solutions are all saddle points. To see this, note that the sample Hessian is:
\begin{equation}
    H(x) =\begin{bmatrix}  \mathbbm{1}_>(w^Tx)^2   & -  \mathbbm{1}_>  yx^T \\
     -\mathbbm{1}_>  yx & 
    0
    \end{bmatrix},
\end{equation}

The expected Hessian is:
\begin{equation}
    \E[H(x)] =\begin{bmatrix} \E [\mathbbm{1}_>(w^Tx)^2]   & -\E  [\mathbbm{1}_>  yx^T] \\
     -\E[\mathbbm{1}_>  yx] & 
    0
    \end{bmatrix},
\end{equation}
which is a rank-2 matrix whose eigenproblem is solved by eigenvalues
\begin{equation}
        \lambda_\pm =   \frac{1}{2} \left(\E [\mathbbm{1}_>(w^Tx)^2] \pm \sqrt{\E [\mathbbm{1}_>(w^Tx)^2]^2 + \|\E  [\mathbbm{1}_>  yx^T]\|^2 } \right),
\end{equation}
and eigenvectors:
\begin{equation}
    v_\pm = (\lambda_\pm ,  - \E  [\mathbbm{1}_>  yx^T]).
\end{equation}
Obviously, $\lambda_- <0$, and these solutions are strict saddle points, except for the directions of $w$ that $\E  [\mathbbm{1}_>  yx^T] = 0$. 

Now, a crucial observation is that the saddle points given by this condition can be divided into two classes with dramatically different properties under a stochastic optimization algorithm: (I) $w\neq 0$ 
and (II) $w=0$. For the type I saddle, we have that the sample-wise gradient is
\begin{equation}
    \nabla_u \ell = \mathbbm{1}_> w^T x (uw^T x - y),
\end{equation}
which is a random variable that does not vanish in general. This random noise facilitates the escape from the saddle.

For the type II, we have that with probability $1$,
\begin{equation}
    \nabla_u \ell = 0,
\end{equation}
which signals a vanishing noise, the lack of which means that this type of saddle is difficult to escape. 
Alternatively, let $u = u_0 +\Delta u$ and $w = w_0 + \Delta w$ denote a small deviation from the saddle $(u_0, w_0)$. Then,
\begin{equation}
    \nabla_u \ell (x) = \begin{cases}
        \Theta(1) & \text{if $w_0\neq 0$};\\
        O(\|\Delta w\|) & \text{if $w_0 = 0$}.
    \end{cases}
\end{equation}  
\begin{equation}
    \nabla_w \ell (x) = O(\|\Delta u\|).
\end{equation}

\subsection{Experimental Detail for Figure~\ref{fig:first phase diagram}-right}\label{app sec: exp detail}
The experiment is performed for a two-dimensional system whose dynamics is specified in \eqref{eq: sgd dynamics linearized}. The expectation of the Hessian $\E[\hat{H}]$ is chosen to be $\text{diag}(0.1, -0.1)$, while the noise is generated via a normal $2\times 2$ random matrix $M_\text{noise}$. The noisy Hessian is obtained as
\begin{equation}
    \hat{H} = \E[\hat{H}] + M_\text{noise}+ M_\text{noise}^T,
\end{equation}
and one can verify that such $\hat{H}$ is symmetric and consistent with our choice of $\E[\hat{H}]$. The initial state is sampled from a unit circle. The dynamics stops at time step $t$, and the Lyapunov exponent is calculated as $\frac{1}{t}\log||\weight_t||$, if one of the three following conditions is satisfied: $||\weight||$ reaches the upper cutoff of $10^{100}$; $||\weight||$ reaches the lower cutoff of $10^{-140}$; the preset maximal number of steps of $5000$ is reached. For each learning rate, the Lyapunov is obtained as the average of the results collected in $800$ independent runs.

\subsection{Phases of Finite-Size Datasets}\label{app sec: finite size}

See Figure~\ref{fig: finite size dataset}. 

\begin{figure}[b!]
    \centering
    \includegraphics[width=0.33\linewidth]{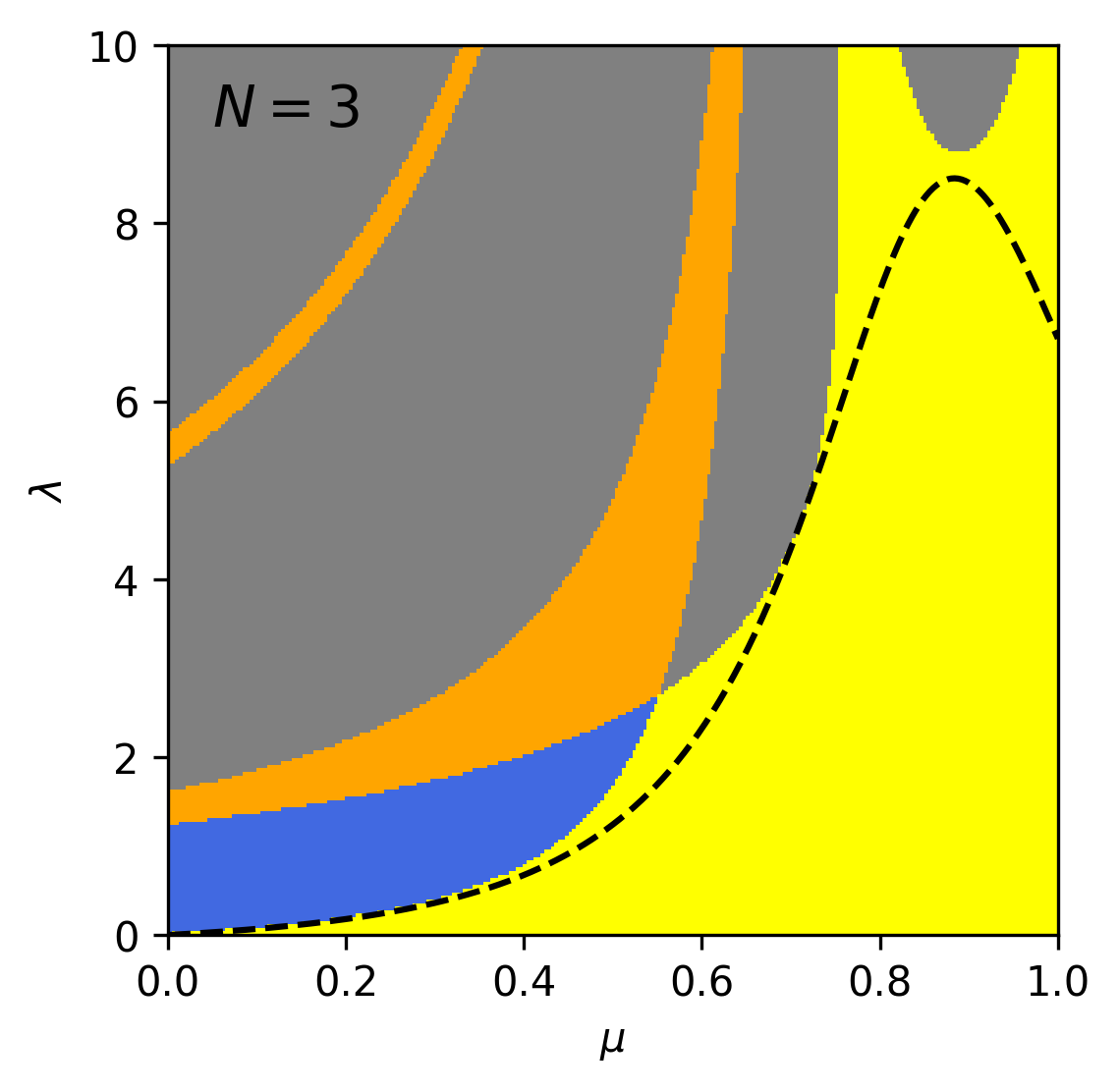}
    \includegraphics[width=0.33\linewidth]{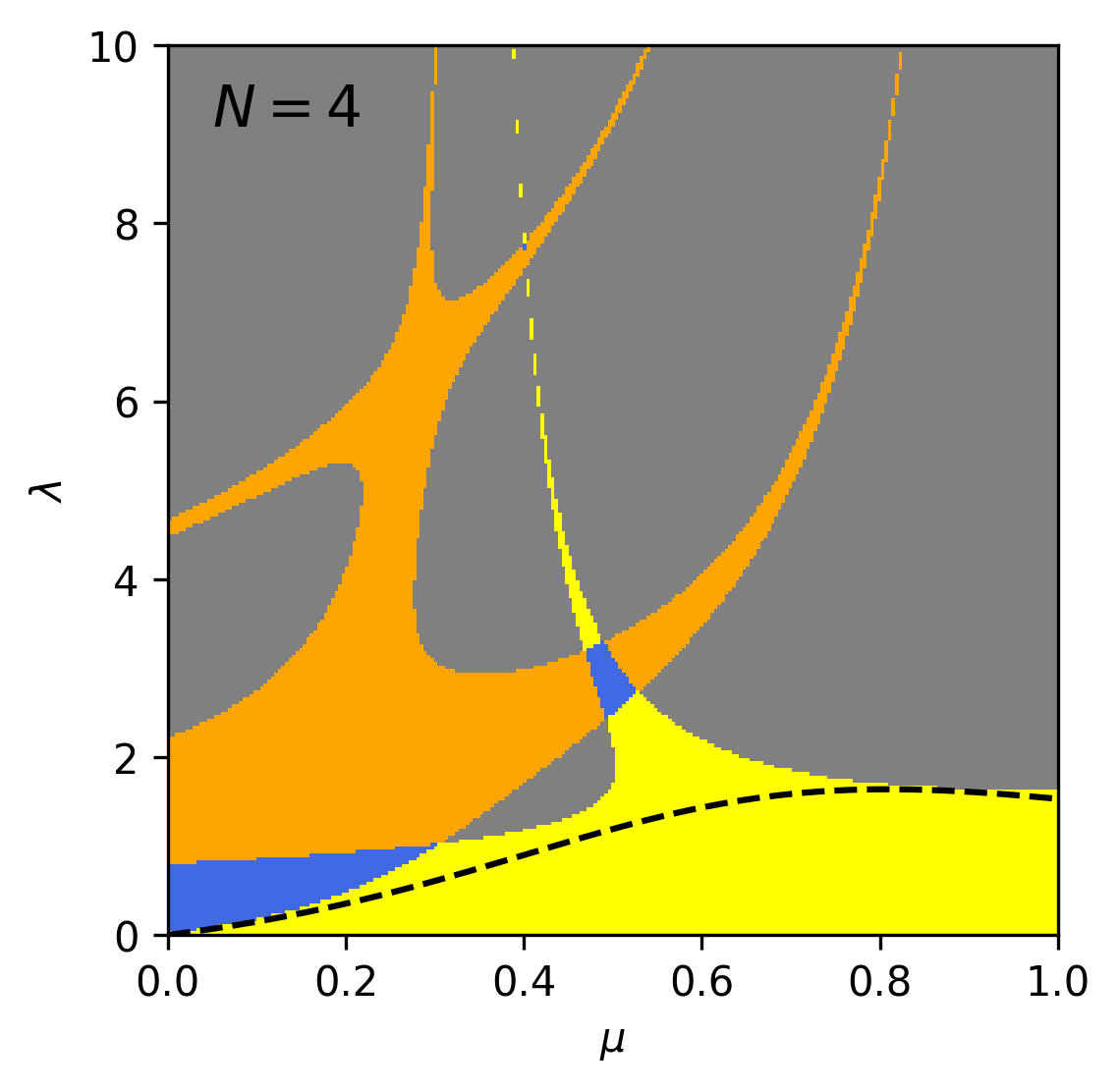}
    
    \includegraphics[width=0.33\linewidth]{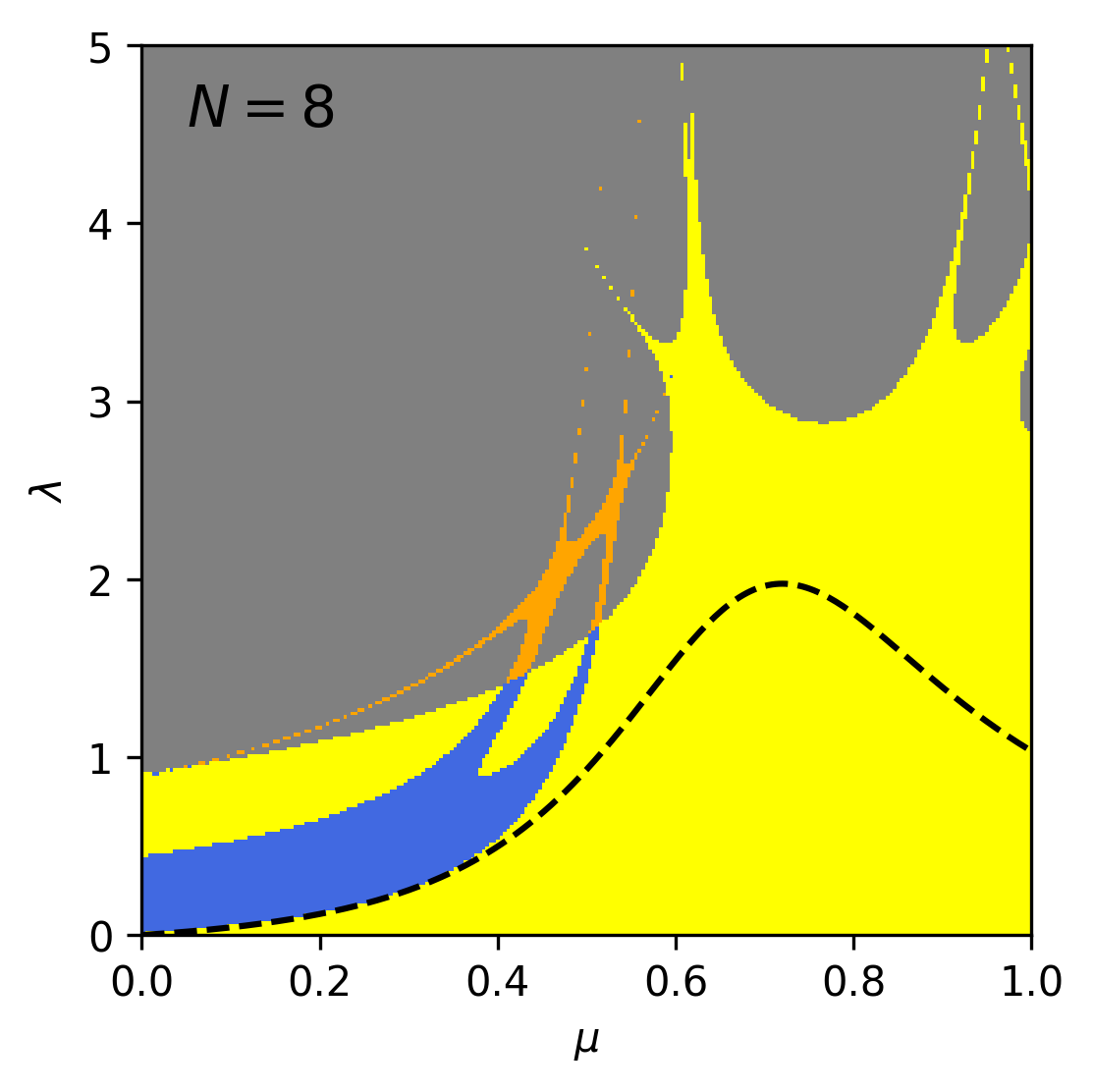}
    \includegraphics[width=0.33\linewidth]{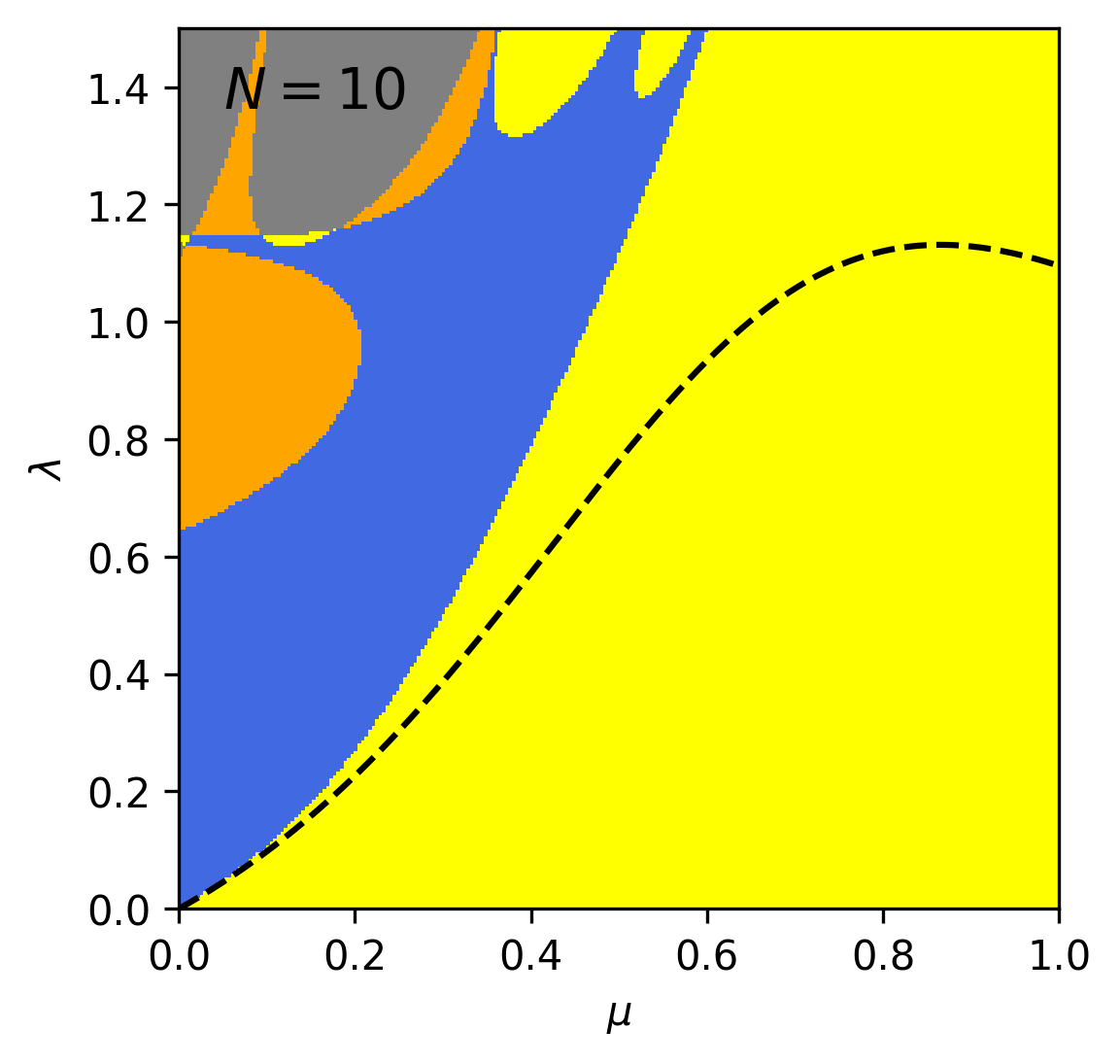}

    \includegraphics[width=0.33\linewidth]{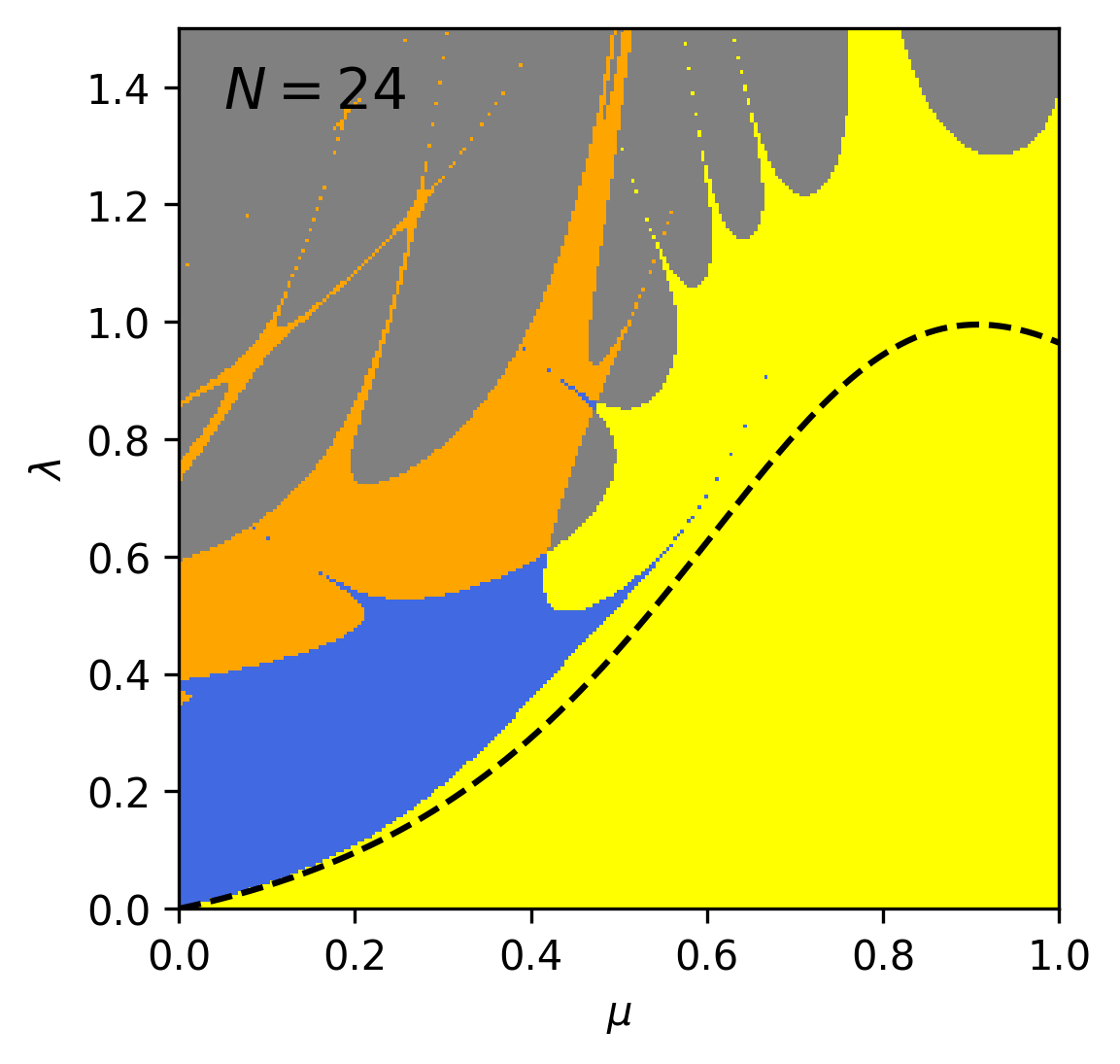}
    \includegraphics[width=0.33\linewidth]{plots/phase_diagram_100.png}

    \caption{\small \textbf{Phase diagrams of SGD stability for finite-size dataset}. The data sampling is the same as in Figure~\ref{fig: more phase diagrams}. From upper left to lower right: $N=3,\ 4,\ 8,\ 10,\ 24,\ 100$. As the dataset size tends to infinity, the phase diagram converges to that in Figure~\ref{fig: more phase diagrams}. The lower parts of all the phase diagrams look similar, suggesting a universal structure.}
    \label{fig: finite size dataset}
\end{figure}

\begin{figure*}[t!]
    \centering
    \includegraphics[width=0.32\linewidth]{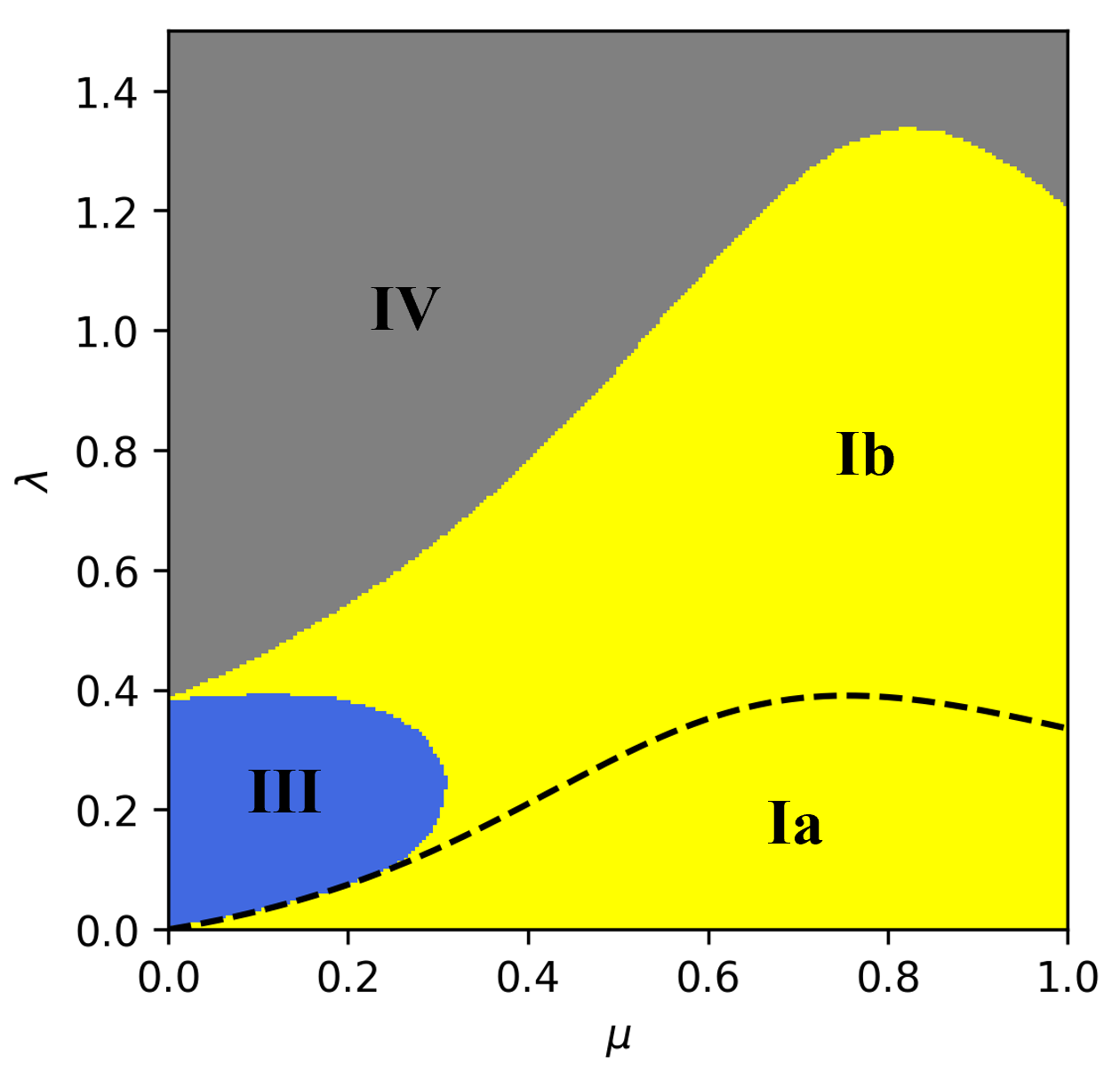}
    \includegraphics[trim={0 2mm 0 0}, clip, width=0.32\linewidth]{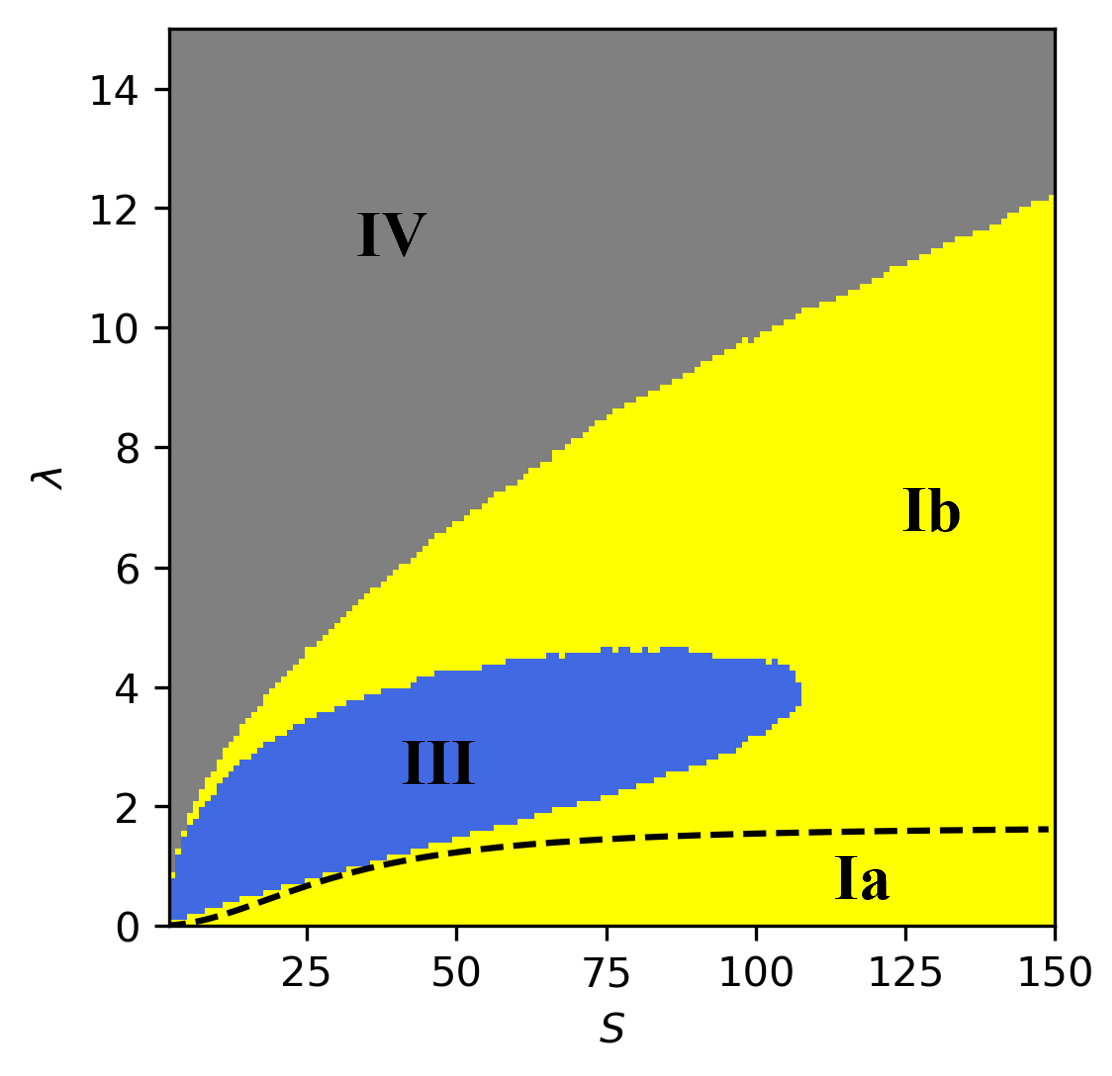}
    \caption{\small \textbf{Phase diagrams of SGD stability}. The definitions of the phases are the same as those in Figure~\ref{fig:first phase diagram}. We sample a dataset of size $N$ such that $x \sim \mathcal{N}(0, 1)$ and noise $\epsilon \sim \mathcal{N}(0, 4)$, and generate a noisy label $y=\mu x + (1-\mu) \epsilon$. Left: the $\lambda-\mu$ (noise level) phase diagram for $S=1$ and $N=\infty$. Right: The $\lambda-S$ (batch size) phase diagram for $\mu=0.06$ and $N=\infty$.}
    \label{fig: more phase diagrams}
\end{figure*}


\subsection{Resnet18 with Static Label noise}
See Figure~\ref{app fig: cifar static}, where the labels of the training set is corrupted at a fixed probability to a different fixed label during training. We see that the problem of initialization remains and features a similar phase boundary.

\begin{figure}
    \centering
    \includegraphics[width=0.35\linewidth]{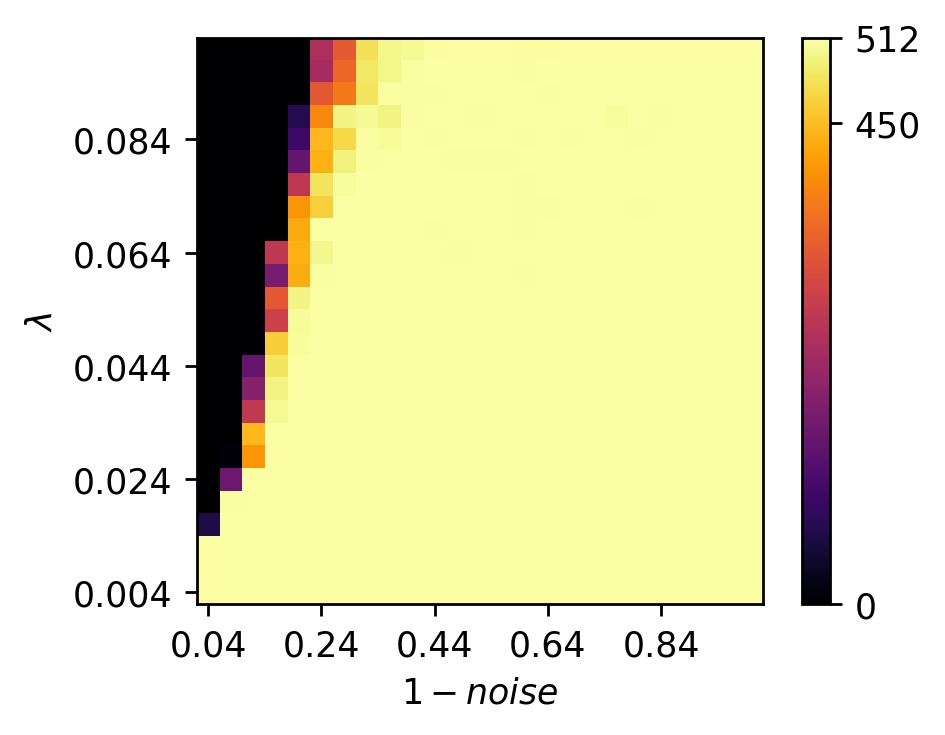}
    \includegraphics[width=0.35\linewidth]{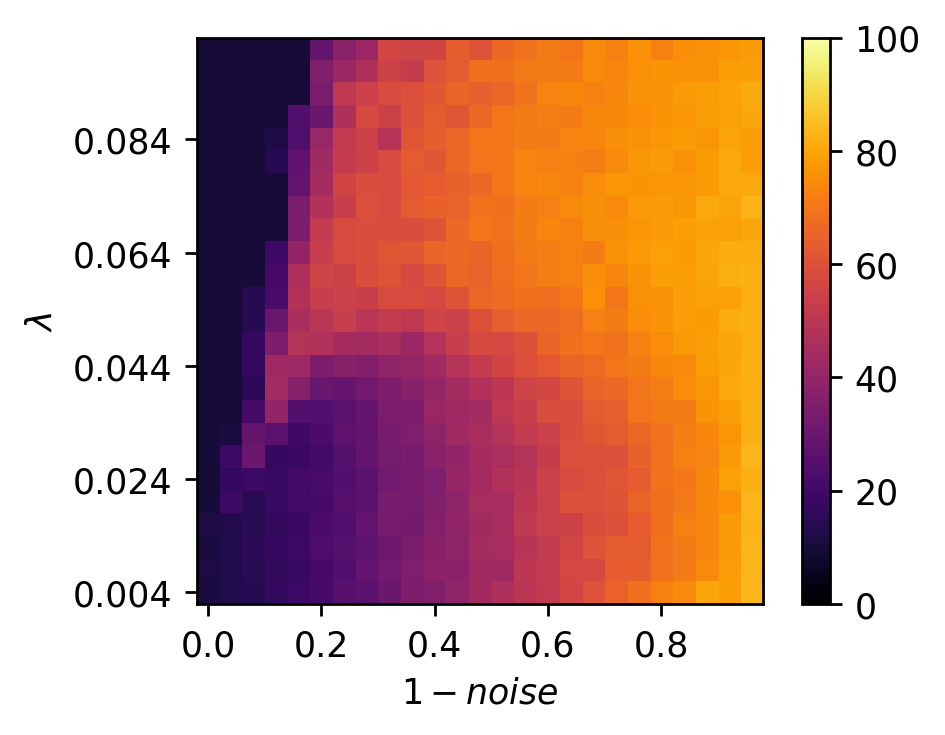}
    \caption{\small Rank (\textbf{left}) and test accuracy (\textbf{right}) of the ResNet18 trained in a data set with static noise. The transition of rank has a clear boundary. The model has a full rank but random-guess level performance for large noise and small learning rates. Here, \textit{noise} refers to the probability that the data point is mislabeled.}
    \label{app fig: cifar static}
\end{figure}

\subsection{Effect of Changing Batch size on Resnet18}
See Figure~\ref{fig:resnet18 different batchsize}.
\begin{figure}[t!]
    \centering
    \includegraphics[width=0.3\linewidth]{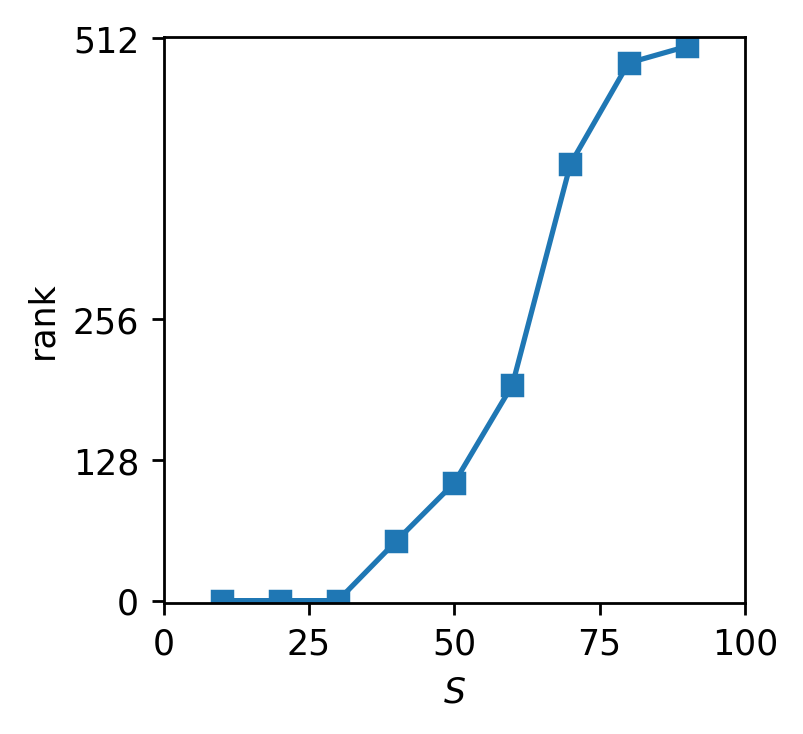}
    \caption{\small The rank of the penultimate-layer representation of Resnet18 trained with different levels of batch sizes. In agreement with the phase diagram, the model escapes from the low-rank saddle as one increases the batch size.}
    \label{fig:resnet18 different batchsize}
\end{figure}

\clearpage
\subsection{Diagonal Approximation}
\label{app: critical lr}
Here, we compare the empirical rank of the solution with the diagonally approximated critical learning rates obtained in \eqref{eq: lr condition}. See Figure~\ref{fig: crit lr}. The experiment is run on a two-layer fully connected linear network: $50 \to 50\to 50$, which is equivalent to a matrix factorization problem. The model is initialized with the standard Kaiming init. The dataset we consider is one with a sparse but full-rank signal. 

Let $\odot$ denote the Hadamard product. The input data is generated as $x= m \odot X$, where $X\sim \mathcal{N}(0, I_{50})$ and $m$ is a random mask where a random element is set to be $1$, and the rest is zero. The labels $Y$ are generated as $Y = \mu x + (1-\mu) (m \odot \epsilon)$, where $\epsilon \sim \mathcal{N}(0, 2 \text{diag}(0.01, 0.05, ...., 2.01))$ is the noise. 

\begin{figure}[t!]
    \centering
    \includegraphics[width=0.35\linewidth]{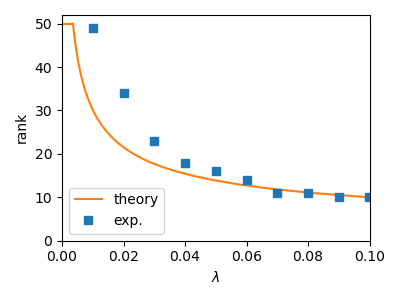}
    \includegraphics[width=0.35\linewidth]{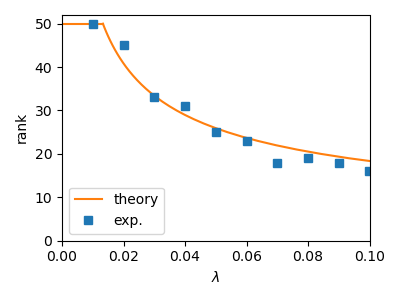}
    \includegraphics[width=0.35\linewidth]{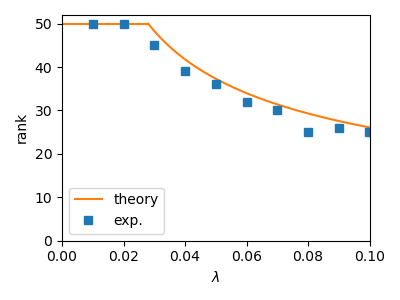}
    \includegraphics[width=0.35\linewidth]{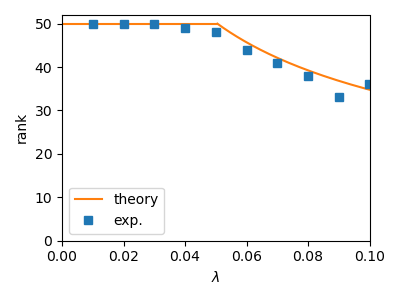}
    \caption{Rank vs learning rate in a vanilla matrix factorization problem for, from upper left to lower right, $\mu = 0.05,\ 0.15,\ 0.25,\ 0.35$. The theoretical curve is from the diagonal approximation where each subspace of the model collapses at the critical learning rate $\lambda = -2\frac{\E[h(x)]}{\E[h^2(x)]}$.}
    \label{fig: crit lr}
\end{figure}

{
\begin{figure}[t!]
    \centering
    \includegraphics[width=0.4\linewidth]{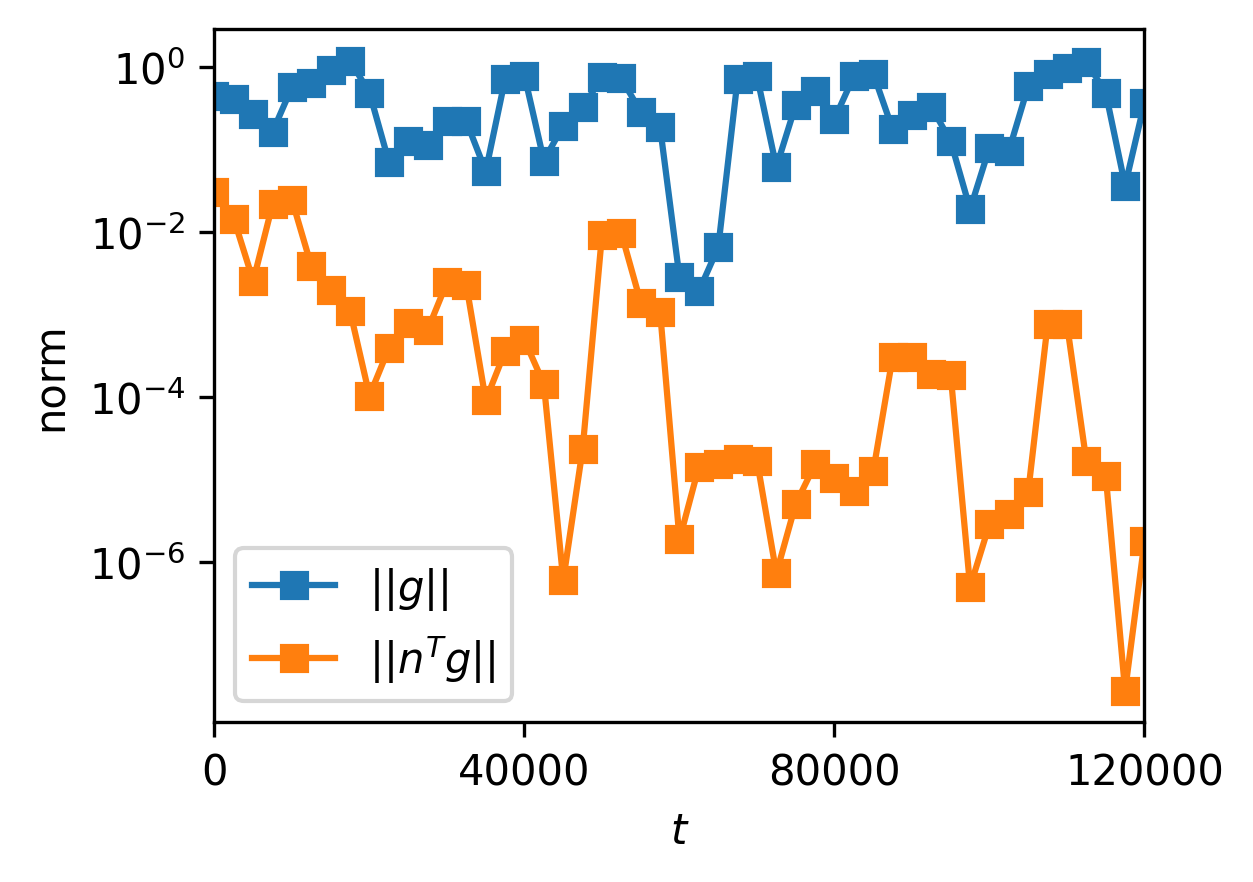}
    \caption{{Norms of gradient ($g$) of a matrix factorization problem trained with SGD. $n$ is a low-rank direction after training. During training, it is commonly the case the model does not converge to a stationary point but to a stationary distribution. Our theory is compatible with this case because it is possible and common for the model to converge to a point in some subspace, even if it is not converging to a point overall.}}
    \label{fig:stationary distribution}
\end{figure}

\subsection{Convergence to Stationary distributions}\label{app sec: stationary distribution}

As we mentioned in the main text, our theory is compatible with the case when the model converges to a stationary distribution but not a stationary point, which is more commonly the case during actual deep learning practice \cite{zhang2022neural}. The experimental setup is the same as in the previous section.

See Figure~\ref{fig:stationary distribution}, where we plot the norm of the gradient $g$ and the norm of $n^Tg$, where $n$ is a low-rank direction after training. Here, we see that the norm of the gradient does not converge to zero, but to a positive value, signaling a convergence to a stationary distribution. At the same time, the norm of $n^Tg$ does converge to zero, which means that in some subspace, the parameters do converge to a point. This justifies our argument.

}


\begin{figure}[t!]
    \centering
    \includegraphics[width=0.35\linewidth]{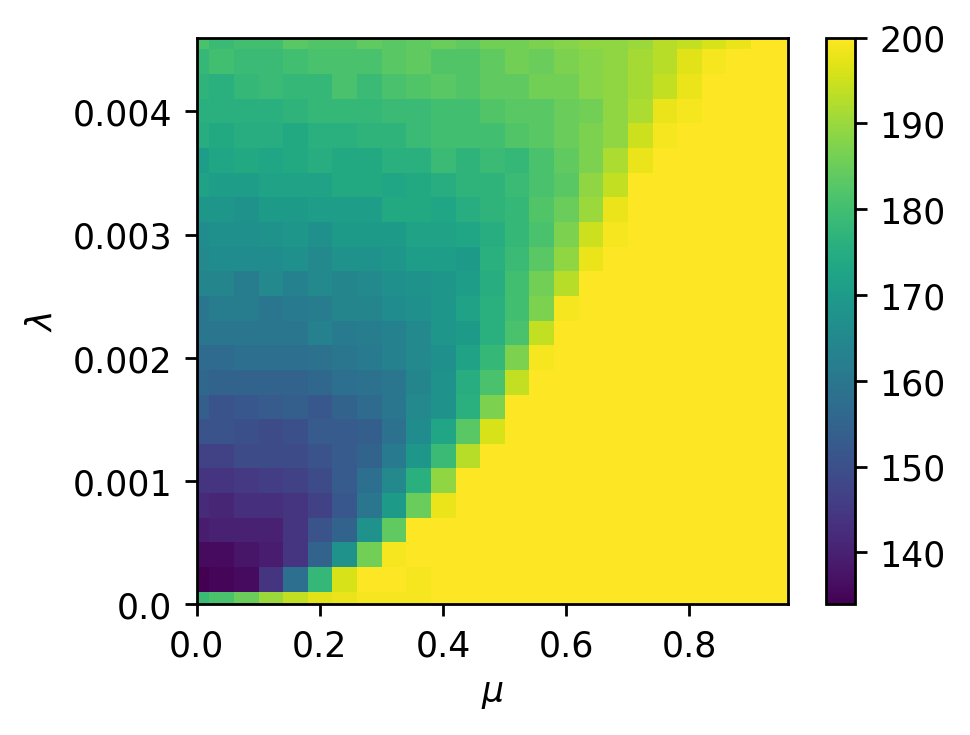}
    \includegraphics[width=0.35\linewidth]{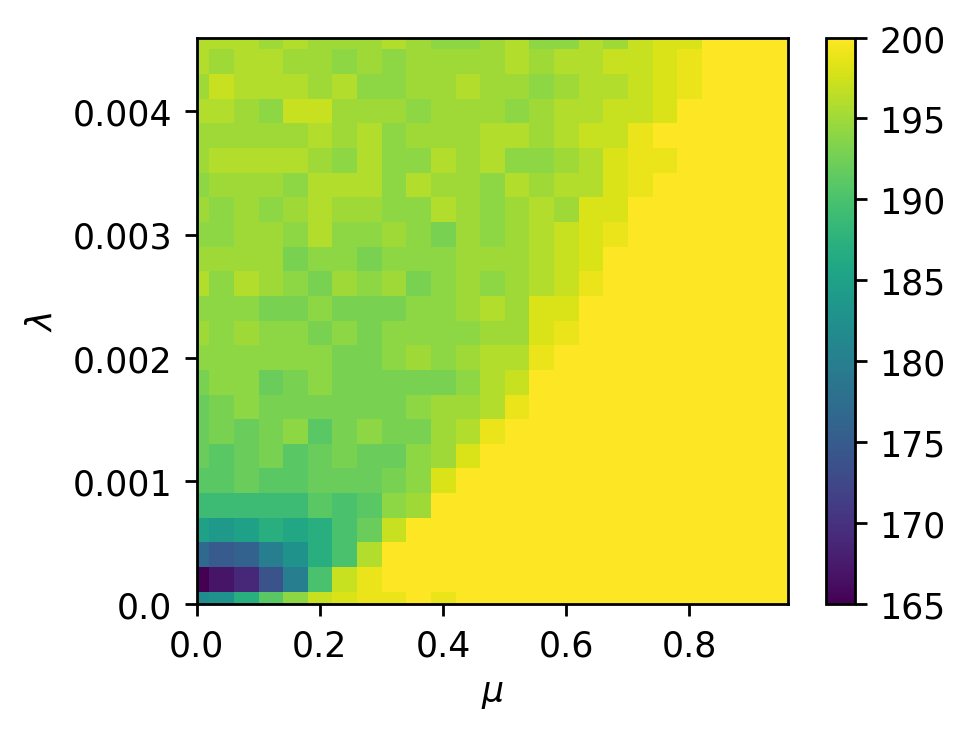}
    \includegraphics[width=0.35\linewidth]{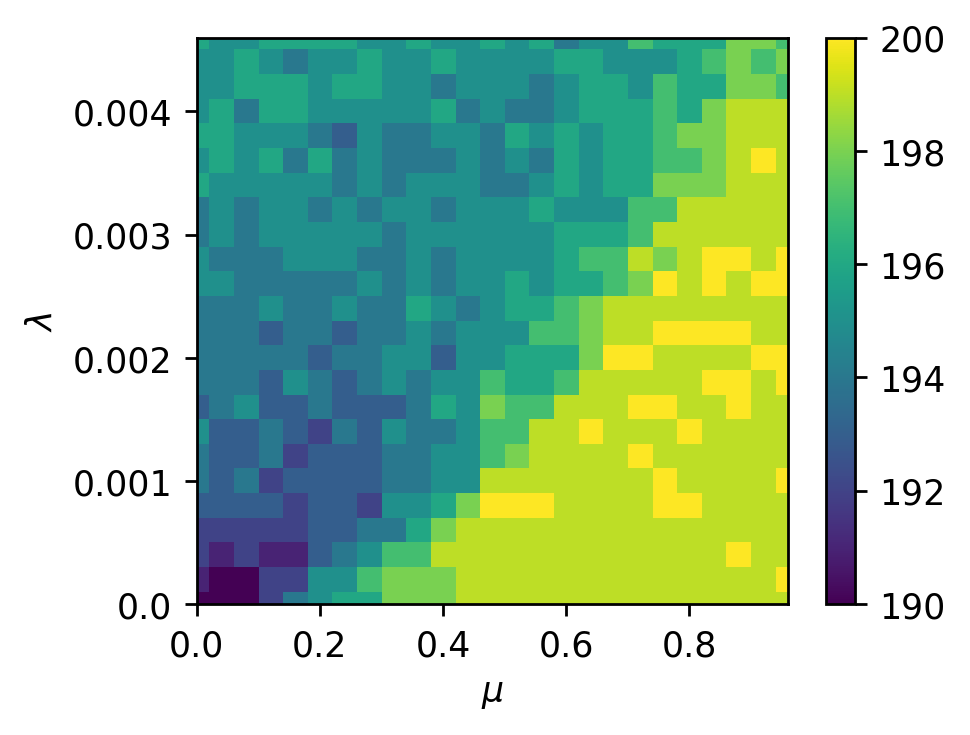}
    \includegraphics[width=0.35\linewidth]{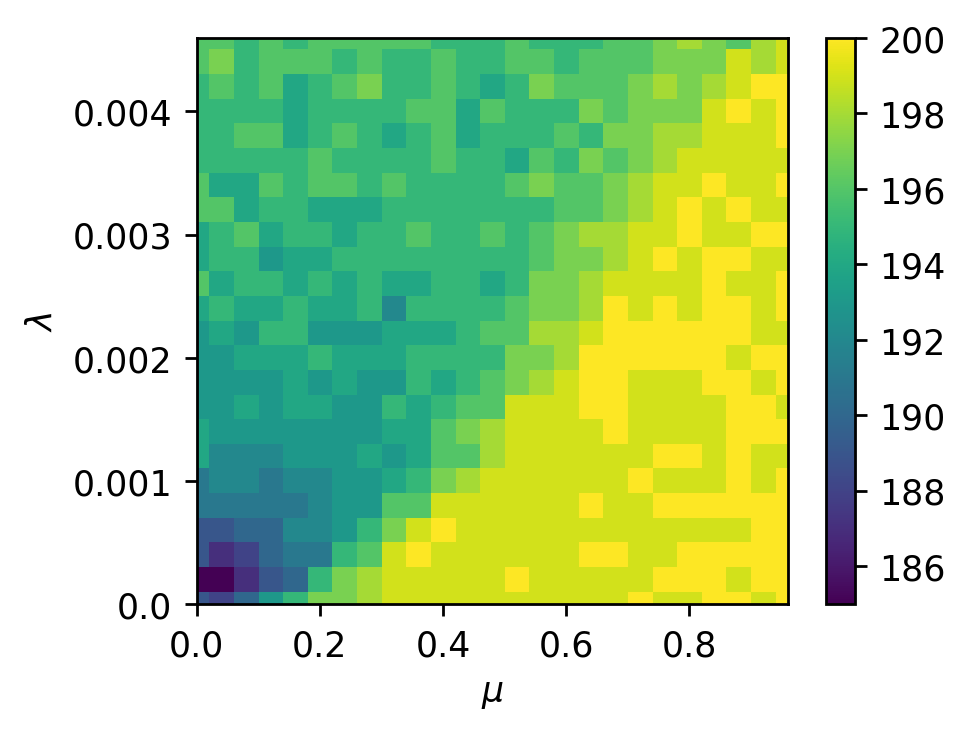}
    \caption{Rank of the converged solution for two-layer linear (upper left), tanh (upper right), relu (lower left) and swish (lower right) models.}
    \label{fig: nn adam}
\end{figure}

\subsection{Experiment with Adam}
We note that the phenomena we studied is not just a special feature of the SGD, but, empirically, seems to be a universal feature of first-order optimization methods that rely on minibatch sampling. Here, we repeat the experiment in Figure~\ref{fig: more phase diagrams}. We train with the same data and training procedure, except that we replace SGD with Adam \cite{journals/corr/KingmaB14_adam}, the most popular first-order optimization method in deep learning. Figure~\ref{fig: nn adam} shows that similarly to SGD, Adam also converges to the low-rank saddles in similar regions of learning rate and $\mu$.

\clearpage

\section{Proofs}
\subsection{Proof of Theorem~\ref{theo: 1d symmetry lyapunov condition}}

\begin{proof}
Consider Eq.~\eqref{eq: sgd dynamics linearized}:
\begin{equation}
    \theta_{t+1} = \theta_t - \lambda \hat{H}(x) (\theta_t - \theta^*).
\end{equation}
Defining $w_t=\theta_t - \theta^*$, this equation can be written as
\begin{equation}\label{eq: proof linear dynamics}
    w_{t+1} = w_t - \lambda \hat{H}(x) w_t,
\end{equation}
which is mathematically equivalent to the case when $\theta^*=0$. Therefore, without loss of generality, we write the dynamics in the form of Eq.~\eqref{eq: proof linear dynamics} in this proof and the rest of the proofs.

Now, when $\hat{H} \propto nn^T$ is rank-$1$, we can multiply $n^T$ from the left on both sides of the dynamics to obtain
\begin{equation}
     n^Tw_{t+1} = n^Tw_t - \lambda h(x) n^Tw_t.
\end{equation}
The dynamics thus becomes one-dimensional in the direction of $n^T$. 

Let $h_t$ denote the eigenvalue of the Hessian of the randomly sampled batch at time step $t$. The dynamics in Eq.~\eqref{eq: 1d symmetry dynamics} implies the following dynamics
\begin{equation}
    \|n^Tw_{t+1}\| / \|n^Tw_t\| = |1 - \lambda h_t|,
\end{equation}
which implies
\begin{equation}
    \|n^Tw_{t+1}\| / \|n^T w_0\| = \prod_{\tau=1}^t |1 - \lambda h_\tau|.
\end{equation}
We can define auxiliary variables $z_t:= \log (\|n^T w_{t+1}\| / \|n^T w_0\|) - m$ and $m := \E [\log (\|n^T w_{t+1}\| / \|n^T w_0\|)] = t\E_x [\log |1 - \lambda h_t|]$. Let $\epsilon>0$. We have that 
\begin{align}
    \mathbb{P}(\|n^T w_t\| < \epsilon)  &= \mathbb{P}(\|n^T w_0\|e^{z_t + m} < \epsilon)\\
    &= \mathbb{P}\left(\frac{1}{{t}} z_t < \frac{1}{{t}} ( \log \epsilon / \|n^T w_0\| - m)\right)\\
    &= \mathbb{P}\left( \frac{z_t}{t} <  - \E_x \log |1 - \lambda h_t| + o(1)\right).
\end{align}
By the law of large numbers, the left-hand side of the inequality converges to $0$, whereas the right-hand side converges to a constant. Thus, we have, for all $\epsilon>0$,
\begin{equation}
    \lim_{t\to \infty} \mathbb{P}(\|n^T w_t\| < \epsilon) = \begin{cases} 1 &\text{if $m<0$};\\
    0 &\text{if $m>1$}.
    \end{cases}
\end{equation}
This completes the proof.
\end{proof}

\subsection{Proof of Proposition~\ref{prop: moment insufficiency}}

\begin{proof}
Part 2 of the proposition follows immediately from Proposition~\ref{prop: no convergence}, which we prove below. Here, we prove part 1. 

It suffices to consider a dataset with two data points for which $h(x_1)=1/\lambda$ and $h(x_2)=c_0$, where each data point is sampled with equal probability. Let $c_0$ be such that 
\begin{equation}
    |1-\lambda c_0|^p > \frac{1}{2}.
\end{equation}
Now, we claim that this dynamics converges to zero in probability. To see this, note that
\begin{equation}
    \|z_{t+1}\| = \begin{cases}
        \|z_{t}\||1 - \lambda/\lambda| =0 &\text{with probability $0.5$};\\
        \|z_{t}\||1 - \lambda c_0| &\text{with probability $0.5$}.
    \end{cases}
\end{equation}
Therefore, at time step $t$, $\p(z_t=0) \geq 1- 2^{-t}$, which converges to $0$. This means that $z_t$ converges in probability to $0$. 

Meanwhile, the $p$-norm is
\begin{align}
    \E[\|z_{t+1}\|^p] &= \frac{1}{2} \E[\|z_{t}\|^p] |1 - \lambda c_0|^p\\
    &\propto \frac{1}{2^t}|1 - \lambda c_0|^{pt} \to 0.
\end{align}
The convergence to zero follows from the construction that $|1-\lambda c_0|^p > \frac{1}{2}$. This completes the proof.
\end{proof}

\begin{proposition}\label{prop: no convergence}
    (No convergence in $L_p$.) For every strict saddle point $\theta^*$, there exists an initialization $\theta_0$ such that for any $\lambda\in \mathbb{R}_+$ and distance function $f(\cdot, \theta^*)$, $\theta^*$ is unstable in $f$.
\end{proposition}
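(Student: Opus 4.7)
The plan is to exhibit a single initialization $\theta_0$, determined only by the geometry of the saddle and not by $\lambda$, for which the expected iterate diverges in norm; this will force instability under any reasonable (norm-based) distance function $f$.

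The first step is to use strictness. Since $\theta^*$ is a strict saddle, the expected Hessian $H := \E_x[\hat{H}(x)]$ has at least one strictly negative eigenvalue, say $-\mu$ with $\mu > 0$, and I pick a corresponding unit eigenvector $v$. I set $w_0 := \theta_0 - \theta^* = v$, so $w_0$ lies purely in the unstable eigenspace of $H$. This choice is independent of $\lambda$, which is what the proposition demands.

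The second step is to track the first moment of the linearized recursion $w_{t+1} = (I - \lambda \hat{H}_t) w_t$ from Eq.~\eqref{eq: proof linear dynamics}, where the $\hat{H}_t$ are i.i.d.\ and independent of $w_t$ (which depends only on $w_0, \hat{H}_0, \ldots, \hat{H}_{t-1}$). Taking expectations and using independence yields $\E[w_{t+1}] = (I - \lambda H)\, \E[w_t]$, so by induction
\begin{equation}
\E[w_t] = (I - \lambda H)^t v = (1 + \lambda \mu)^t\, v.
\end{equation}
Hence $\|\E[w_t]\| = (1+\lambda\mu)^t \to \infty$ for every $\lambda > 0$ and every norm $\|\cdot\|$.

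The third step converts divergence of the mean into divergence of any moment-based distance. For a distance $f(\theta_t, \theta^*) = \|\theta_t - \theta^*\|^p$ with $p \geq 1$, Jensen's inequality applied to the convex map $x \mapsto \|x\|$ gives $\E\|w_t\| \geq \|\E[w_t]\|$, and a second application to the convex map $x \mapsto x^p$ gives $\E\|w_t\|^p \geq (\E\|w_t\|)^p \geq (1+\lambda\mu)^{pt} \to \infty$. Thus $\E f(\theta_t, \theta^*)$ cannot tend to zero, so $\theta^*$ is unstable in $f$. More generally, any distance $f$ that is bounded below by a strictly increasing function of a norm inherits the same conclusion.

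The main obstacle is simply interpreting the phrase \emph{distance function}: the argument above covers any $f$ that dominates a norm-based quantity (and, in particular, all $L_p$-stability notions of Definition~\ref{def:2}), but would fail for artificially bounded distances. Under the natural reading consistent with the surrounding text, the proof is essentially the one-line observation that the unstable direction of $H$ survives the expectation, combined with Jensen's inequality; no estimate on higher moments or on the fluctuations of $\hat{H}_t$ is needed.
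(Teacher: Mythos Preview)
Your proposal is correct and follows essentially the same route as the paper: choose $\theta_0$ in the unstable eigenspace of $H=\E[\hat H]$, observe that $\E[w_t]=(I-\lambda H)^t w_0$ is just the GD iterate and hence diverges for every $\lambda>0$, then lift this to divergence of the expected distance via Jensen. The only cosmetic difference is that the paper applies Jensen once directly to the (assumed convex) distance function $f$, whereas you apply it in two stages to $\|\cdot\|$ and then to $x\mapsto x^p$; your remark about the ambiguity of ``distance function'' is exactly the same caveat the paper sweeps under the phrase ``convex by definition.''
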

\begin{proof}
    This problem is easy to prove when $\theta$ is one-dimensional. For a high-dimensional $\theta$, the dynamics of SGD is
    \begin{equation}
        \theta_{t+1} = (I-\lambda \hat{H}_t)\theta_t.
    \end{equation}
Note that the expected value of $\theta_t$ is the same as the gradient descent iterations:
\begin{equation}
    \E[\theta_{t+1}] = (I-\lambda \E[\hat{H}])\E[\theta_t] = (I-\lambda \E[\hat{H}])^t \theta_0,
\end{equation}
which diverges if $\theta_0$ is in one of the escape directions of $\E[\hat{H}]$, which exist by the definition of strict saddle points. 

Taking the $f-$distance of both sides and taking expectation, we obtain
\begin{align}
    \E[f(\theta_{t}, \theta^*)] & \geq f(\E[\theta_t], \theta^*)\\
    &= f\left((I-\lambda \E[\hat{H}])^t\theta_0, \theta^*\right)\not\to 0.
\end{align}
The first line follows from the fact that the distance function is convex by definition, and so one can apply Jensen's inequality.

Therefore, as long as $\theta_0$ overlaps with the concave directions of $\E[\hat{H}]$, the argument of $f$ diverges, which implies that the distance function converges to a nonzero value. The expected value of $\theta_t$ is just the gradient descent trajectory, which diverges for any strict saddle point. 

By definition, $\E[\hat{H}]$ contains at least one negative eigenvalue, and so the directions that do not overlap with this direction are strict linear subspaces with dimensions lower than the the total available dimensions. This is a space with Lesbegue measure zero. The proof is complete.
\end{proof}

\subsection{Proof of Theorem~\ref{theo: main theorem}}
Let us first state the Furstenberg-Kesten theorem.
\begin{theorem} (Furstenberg-Kesten theorem)
    Let $X_1,\ X_2,\ X_3,...$ be independent random square matrices drawn from a metrically transitive time-independent stochastic process and $\E[\log_+\|X^1\| < \infty]$, then\footnote{$\log_+(x) =\max(\log x,0)$.}
    \begin{equation}
        \lim_{n\to \infty} \frac{1}{n}\log\|X_1 X_2...X_n\|= \lim_{n\to \infty}\E \left[\frac{1}{n}\log\|X_1 X_2...X_n\|\right]
    \end{equation}
    with probability $1$, where $\|\cdot\|$ denotes any matrix norm.
\end{theorem}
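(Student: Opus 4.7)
The plan is to derive the statement as a direct consequence of Kingman's subadditive ergodic theorem, which is the standard route. Define $F_n(\omega) := \log \|X_1(\omega) X_2(\omega) \cdots X_n(\omega)\|$ on the underlying probability space equipped with the measure-preserving shift $\sigma$ associated with the stationary process $(X_k)$. The metric transitivity hypothesis is exactly the statement that $\sigma$ is ergodic. The first key step is to verify that $(F_n)$ is a subadditive cocycle relative to $\sigma$: by submultiplicativity of the matrix norm, $\|X_1\cdots X_{n+m}\| \leq \|X_1\cdots X_n\|\cdot \|X_{n+1}\cdots X_{n+m}\|$, so taking logs gives $F_{n+m}(\omega) \leq F_n(\omega) + F_m(\sigma^n\omega)$.

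Next I would check the integrability hypothesis required by Kingman's theorem, namely $\E[F_1^+] < \infty$. This follows immediately from $F_1 = \log \|X_1\|$ together with the assumption $\E[\log_+\|X_1\|] < \infty$. More generally, subadditivity and stationarity give $\E[F_n^+] \leq n\,\E[\log_+\|X_1\|]$, so the sequence $\E[F_n]$ is well-defined in $[-\infty,+\infty)$; by Fekete's lemma applied to the subadditive sequence $\E[F_n]$, the limit $\gamma := \lim_{n\to\infty} n^{-1}\E[F_n] = \inf_n n^{-1}\E[F_n]$ exists in $[-\infty,+\infty)$. Kingman's subadditive ergodic theorem then yields that $n^{-1}F_n$ converges almost surely (and in $L^1$ when $\gamma > -\infty$) to a $\sigma$-invariant random variable $\gamma(\omega)$, and ergodicity of $\sigma$ (the metric transitivity hypothesis) forces $\gamma(\omega)$ to be almost surely the constant $\gamma$. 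This simultaneously identifies both sides of the claimed equality.

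The main conceptual obstacle, beyond invoking Kingman, is handling the case $\gamma = -\infty$: one must be careful that the a.s. limit and the limit in expectation still agree in that degenerate regime, which is handled by truncating $F_n$ from below (replacing $F_n$ by $F_n \vee (-Mn)$ for large $M$), applying the theorem to the truncated cocycle, and then letting $M\to\infty$ using monotone convergence to pass to the limit in both the a.s. and expectation statements. A secondary technical point is norm-independence of the conclusion: since any two matrix norms are equivalent on a finite-dimensional space, $|\log\|A\|_\alpha - \log\|A\|_\beta|$ is uniformly bounded in $A$, so the quantity $n^{-1}F_n$ has the same almost-sure and expected limit regardless of the choice of norm, which is why the statement can be phrased for any matrix norm. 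With these points in place, the theorem follows.
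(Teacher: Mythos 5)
The paper does not actually prove this statement: it is quoted verbatim as a classical result and attributed to Furstenberg and Kesten, so there is no in-paper argument to compare against. Judged on its own, your sketch is correct and follows the standard modern route: realize $F_n=\log\|X_1\cdots X_n\|$ as a subadditive cocycle over the shift, check $\E[F_1^+]<\infty$, invoke Kingman's subadditive ergodic theorem, and use metric transitivity (ergodicity) to collapse the invariant limit to the constant $\gamma=\inf_n n^{-1}\E[F_n]$, which is simultaneously the almost-sure limit and the limit of expectations. This is anachronistic relative to the original 1960 proof (which predates Kingman and argues directly from the subadditivity of $n\mapsto\E[F_n]$ together with a bare-hands almost-sure convergence argument), but it is the cleaner and more general derivation, and it is the one every modern treatment uses. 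Two small points you gesture at but should make explicit if this were written out in full: first, the theorem as stated allows \emph{any} matrix norm, and only submultiplicative norms give the cocycle inequality directly, so the norm-equivalence remark is not a footnote but a necessary reduction step; second, the product $X_1\cdots X_n$ may be singular or even zero with positive probability, in which case $F_n=-\infty$ on a set of positive measure, and your truncation $F_n\vee(-Mn)$ is exactly what is needed to keep Kingman applicable there, so that step is load-bearing rather than a precaution. With those caveats the argument is complete.
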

Namely, the Lyapunov exponent of every trajectory converges to the expected value almost surely. Essentially, this is a law of large numbers for the Lyapunov exponent. 

Now, we present the proof of Theorem~\ref{theo: main theorem}.

\begin{proof}
First of all, we define $m_t= \log \|\theta_t - \theta^*\|$ and $z_t = m_t - \E[m_t]$. By definition, we have
\begin{align}
    \mathbb{P}( g_t < \epsilon)  &= \mathbb{P}(e^{z_t + m_t} < \epsilon)\\
    &= \mathbb{P}\left(\frac{1}{{t}} (z_t + \E[m_t]) < \frac{1}{{t}} \log \epsilon \right)\\
    &= \mathbb{P}\left( \frac{1}{{t}} (z_t + \E[m_t])  <  o(1)\right).\label{eq: proof prob 1}
\end{align}

We can lower bound this probability by
\begin{equation}
    \mathbb{P}\left( \frac{1}{{t}} (z_t + \E[m_t])  < o(1)\right) \geq \mathbb{P}\left( \frac{1}{{t}} \max_{\theta_0}(z_t + \E[m_t])  < o(1)\right).
\end{equation}

By the definition of SGD, we have 
\begin{align}
    \frac{1}{t} \max_{\theta_0}\left(z_t(\theta_0) + \E[m_t]\right) &= \frac{1}{t} \max_{\theta_0}\log \left\| \prod_i^t(I - \lambda \hat{H}_i) (\theta_t -\theta_0) \right\|.
\end{align}
By the Furstenberg-Kesten theorem~\cite{furstenberg1960products}, this quantity converges to the constant $\Lambda = \lim_{t\to\infty}\E[m_t]/t \in \mathbb{R}$ almost surely. Namely, $z_t/t$ converges to $0$ for almost every SGD trajectory.

Thus, for every $\epsilon$, if $\Lambda <0$, Eq.~\eqref{eq: proof prob 1} can be bounded as
\begin{equation}
    \lim_{t\to \infty}\mathbb{P}( g_t < \epsilon) = \mathbb{P}(\Lambda<0) = 1.
\end{equation}
Because $\Lambda$ is a constant, we have that if $\Lambda <0$, all trajectories from all initialization converge to $0$. This finishes the first part of the proof. For the second part, simply let $z_t$ be the trajectory starting from the trajectory that achieves the maximum Lyapunov exponent. Again, this dynamics escapes with probability $1$ by the Furstenberg-Kesten theorem. The proof is complete.
\end{proof}

\subsection{Proof of Theorem~\ref{theo: type II saddle at init}}
\label{ss:proof type II init}

\begin{proof}
With the linear approximation of activation function $\sigma(x)\approx c_0 x$, the neural network takes the forms of $f(x) = c_0^D\prod_{i=1}^D W^{(i)}x + \sum_{i=1}^{D-1} c_0^{(D-i)} \prod_{j=i+1}^D W^{(j)} b^{(i)}$. The gradient of the loss $\ell(f(x), y(x))$ is thus
\begin{align}
    \nabla_\theta \ell(f(x), y(x)) = (\nabla_{f(x)}\ell)^T  \nabla_{\theta} f(x).
\end{align}
For layer $i$, the gradient
\begin{align}
    \nabla_{W^{(i)}} f(x) & \propto c_0^D\left(\prod_{j=1}^{i-1}W^{(j)}x\right)\left(\prod_{j=i+1}^{D}W^{(j)}\right) + \sum_{k=1}^{i-1} c_0^{(D-k)} \left(\prod_{j=k+1}^{i-1}W^{(j)}b^{(k)}\right)\left(\prod_{j=i+1}^{D}W^{(j)}\right);\\
    \nabla_{b^{(i)}} f(x) & \propto c_0^{(D-i)} \prod_{j=i+1}^D W^{(j)}.
\end{align}
As there is no constant term in the gradient, the gradient vanishes at $\theta = 0$ for all $x$, and so is the projection of the gradient. Thus, $\theta=0$ is a type-II saddle, under the assumption that $\theta=0$ is not a local minimum. 
\end{proof}

\subsection{Proof of Proposition~\ref{prop: phase diagram}}
\begin{proof}
    We consider the dynamics of SGD around a saddle:
\begin{equation}
    \ell = -\chi \sum_i u_i w_i,
\end{equation}
where we have combined ${\frac{1}{S}\sum_{(x, y)\in B}}xy$ into a single variable $\chi$. The dynamics of SGD
is 
\begin{equation}
\begin{cases}
    w_{i,t+1} = w_{i,t} +\lambda \chi u_{i,t};\\
    u_{i,t+1} = u_{i,t} +\lambda \chi w_{i,t}.
\end{cases}
\end{equation}
Namely, we obtain a set of coupled stochastic difference equations. Since the dynamics is the same for all values of the index $i$, we omit $i$ from now on. This dynamics can be decoupled if we consider two transformed parameters: $h_t = w_t + u_t$ and $m_t = w_t -u_t$. The dynamics for these two variables is given by
\begin{equation}
\begin{cases}
    h_{t+1} = h_{t} +\lambda \chi h_{t};\\
    m_{t+1} = m_{t} -\lambda  \chi m_{t}.
\end{cases}
\end{equation}
We have thus obtained two decoupled linear dynamics that take the same form as that in Theorem~\ref{theo: 1d symmetry lyapunov condition}. Therefore, as immediate corollaries, we know that $h$ converges to $0$ if and only if $\E_{B}[\log|1+\lambda \chi|] <0$, and $m$ converges to $0$ if and only if $\E_{B}[\log |1- \lambda \chi|] <0$.

When both $h$ and $m$ converge to zero in probability, we have that both $w$ and $u$ converge to zero in probability. For the data distribution under consideration {in section \ref{sec: phases of learning sgd} and for batch size one}, we have
\begin{equation}
    \E[\log|1+\lambda \chi|] = \frac{1}{2} \log\left|(1+\lambda)(1+\lambda a)\right|
\end{equation}
and 
\begin{equation}
    \E[\log|1-\lambda \chi|] = \frac{1}{2} \log\left|({1-\lambda})({1-\lambda a})\right|.
\end{equation}
There are four cases: (1) both conditions are satisfied; (2) one of the two is satisfied; (3) neither is satisfied. These correspond to four different phases of SGD around this saddle.
\end{proof}

\end{document}